\title[Extending Structural Causal Models for Autonomous Vehicles]{Extending Structural Causal Models for Autonomous Vehicles to Simplify Temporal System Construction \& Enable Dynamic Interactions Between Agents}
\begin{document}
\maketitle

\begin{abstract}
In this work we aim to bridge the divide between autonomous vehicles and causal reasoning. Autonomous vehicles have come to increasingly interact with human drivers, and in many cases may pose risks to the physical or mental well-being of those they interact with. Meanwhile causal models, despite their inherent transparency and ability to offer contrastive explanations, have found limited usage within such systems. As such, we first identify the challenges that have limited the integration of structural causal models within autonomous vehicles. We then introduce a number of theoretical extensions to the structural causal model formalism in order to tackle these challenges. This augments these models to possess greater levels of modularisation and encapsulation, as well presenting temporal causal model representation with constant space complexity. We also prove through the extensions we have introduced that dynamically mutable sets (e.g. varying numbers of autonomous vehicles across time) can be used within a structural causal model while maintaining a relaxed form of causal stationarity. Finally we discuss the application of the extensions in the context of the autonomous vehicle and service robotics domain along with potential directions for future work.
\end{abstract}

\section{Introduction}
We continue to see ever greater integration of artificial intelligence (AI) into systems that physically interact with us and the world around us. This has coincided with the widespread emergence of deep neural networks and other black box methodologies that are limited in their ability to provide transparency and grounded explanations \citep{von2021transparency,adadi2018peeking,gunning2019darpa}. Explanations have long been tied to causality within philosophy \citep{salmon1998causality}, psychology \citep{gerstenberg2022would}, and statistics \citep{pearl2009causality}. Thus, in theory a high-resolution causal model-based approach to developing AI systems can allow us to engineer systems that are both explainable and transparent. Specifically, this work focuses upon autonomous agents that are embodied, and thus have to tackle the challenges that come with having a physical presence in the real world. This is in contrast to AI systems that operate upon or make decisions based upon data but which do not possess physicality, e.g. medical decision AI, automated stock-traders, chatbots, etc.

In this paper we aim to provide a series of extensions to causality literature to facilitate easier integration with autonomous vehicles (AVs). Such systems can potentially pose substantial risk to humans and / or valuable assets. Thus we propose structural causal model (SCM) \citep{pearl2009causality} integration can be utilised for explanation generation in the event of critical failure, both to establish accountability and to refine subsequent system design. Fig. \ref{fig:scenes} provides an example of the type of scenario in which needing to formulate grounded explanations is essential. SCMs also offer a high level of transparency through their structural equations and associated directed acyclical graphs (DAGs).

In order to better enable the integration of causality into AVs, we begin by examining the current state of literature in Sec. \ref{sec:related_work} as well as background on SCMs in Sec. \ref{sec:structural_causal_models}. We then discuss the specific challenges associated with SCM deployment in AVs in Sec. \ref{sec:scms_in_aess}, before proposing novel extensions to the SCM formalisms in Sec. \ref{sec:extensions} that address these challenges. Lastly we explore the extensions in the context of two domains and examine potential directions for future work in Sec. \ref{sec:discussion}.

In the ensuing work, we present the following contributions:
\begin{itemize}
    \item An exploration of literature and challenges surrounding the application of SCMs to AV agent scenarios.
    \item A series of extensions to SCM formalisms to better facilitate integration of SCMs with AI systems that are both embodied and autonomous.
    \item An examination of these extensions applied to the AV domain.
\end{itemize}

\begin{figure}[t]
    \centering

    \subfigure[Observed scene with collision]{
    \centering
    \includegraphics[angle=90,width=0.23\textwidth]{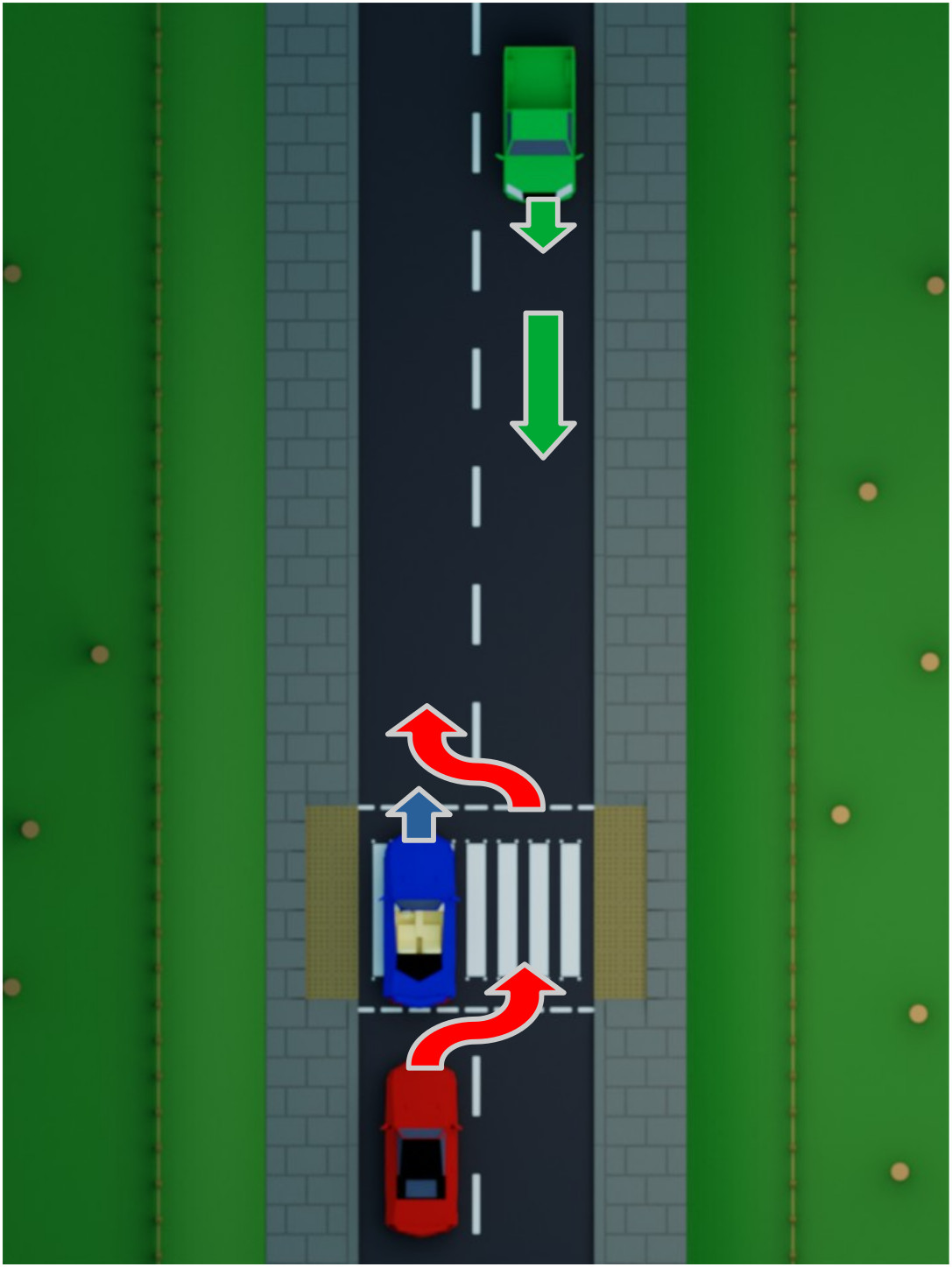}
    \includegraphics[angle=90,width=0.23\textwidth]{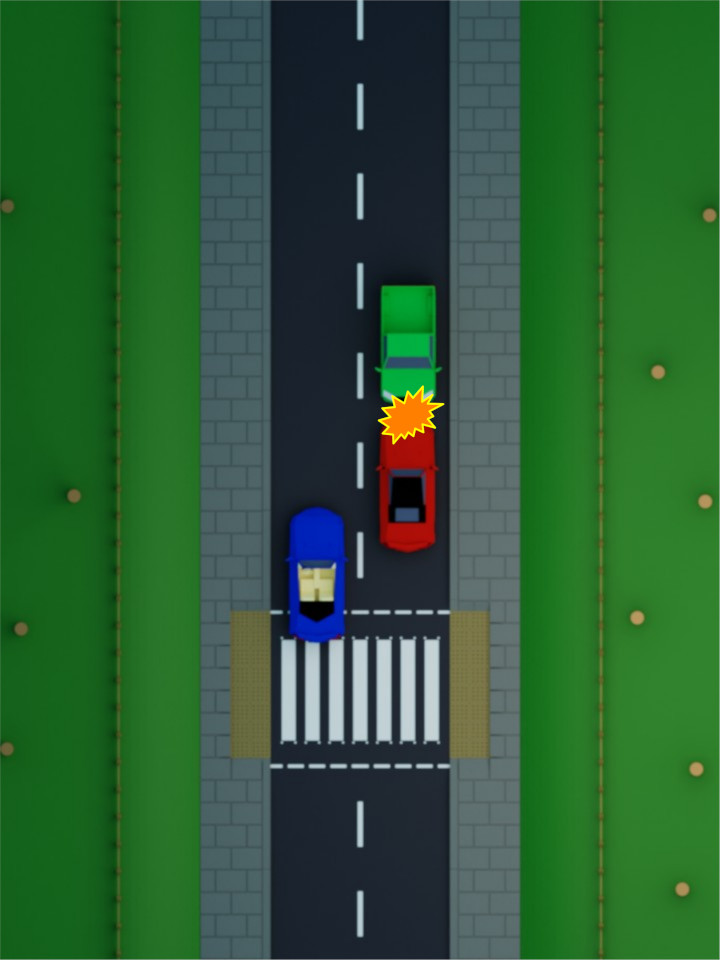}
    } \label{fig:scene_both}
    \subfigure[Counterfactual, no green speed-up]{
    \centering
    \includegraphics[angle=90,width=0.23\textwidth]{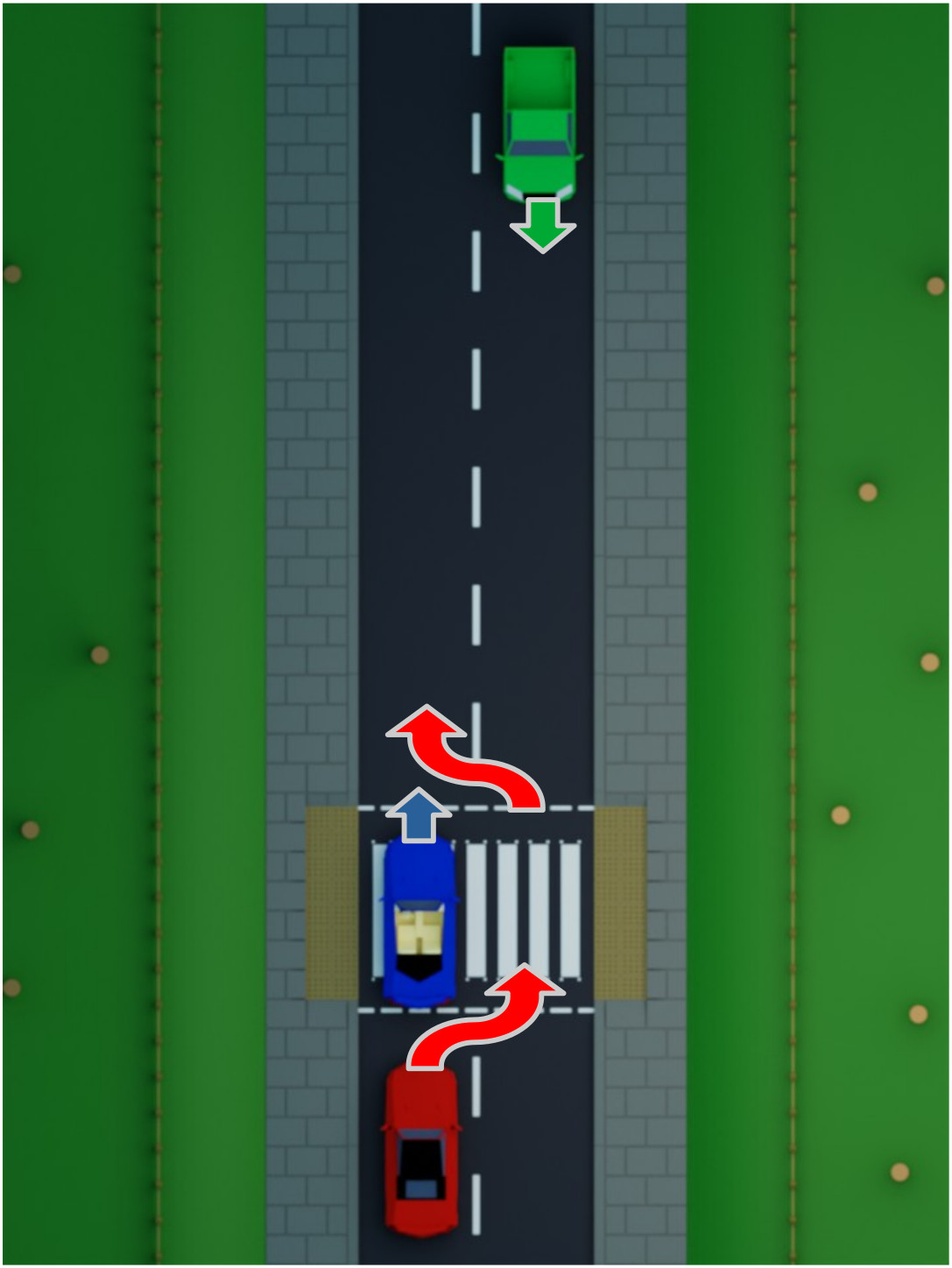}
    \includegraphics[angle=90,width=0.23\textwidth]{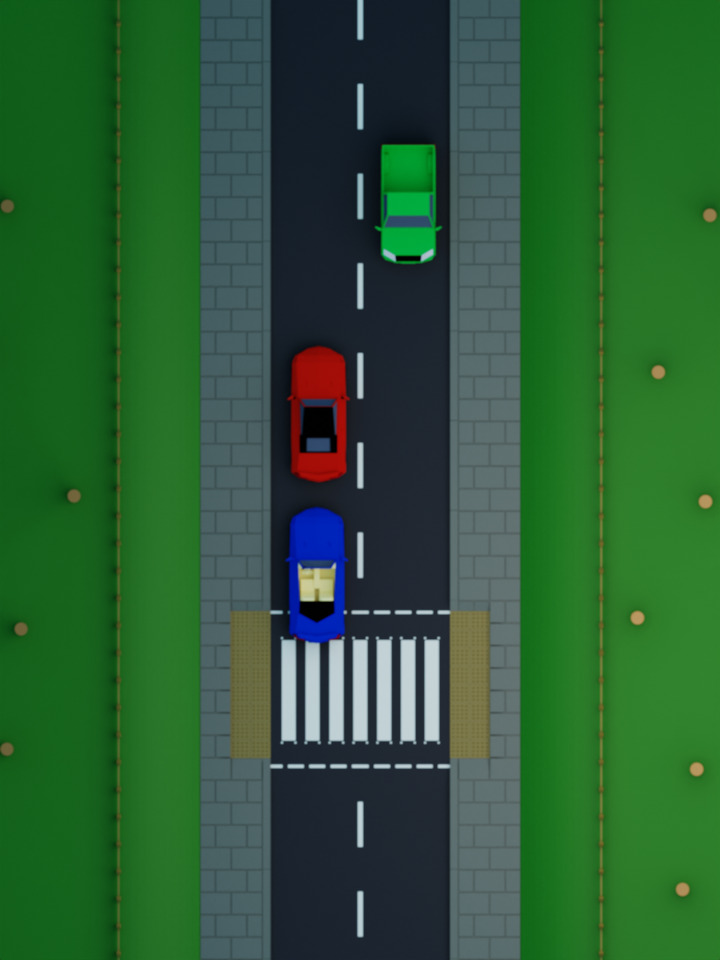}
    } \label{fig:scene_red}
    
    \subfigure[Counterfactual, no red overtake]{
    \centering
    \includegraphics[angle=90,width=0.23\textwidth]{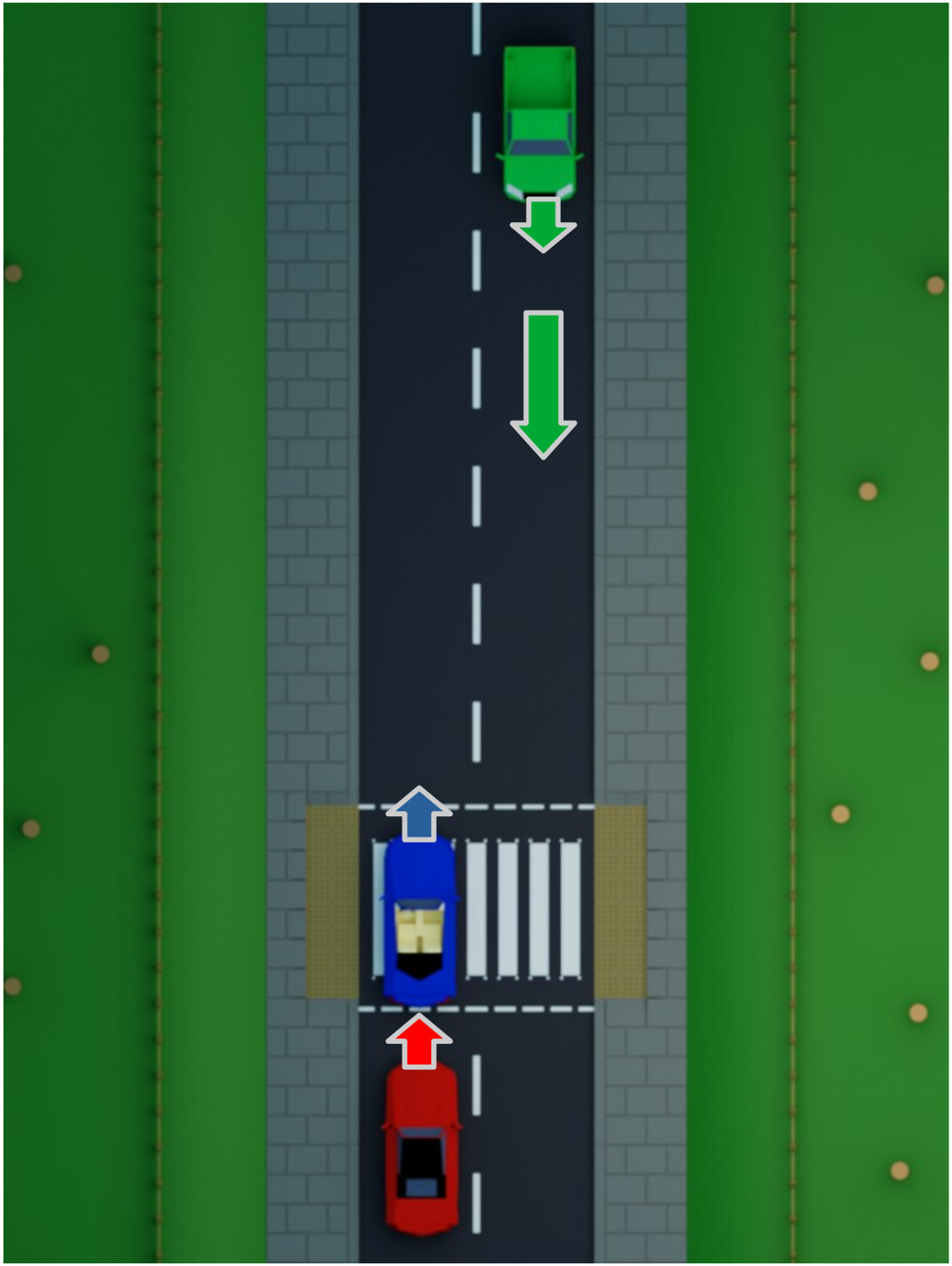}
    \includegraphics[angle=90,width=0.23\textwidth]{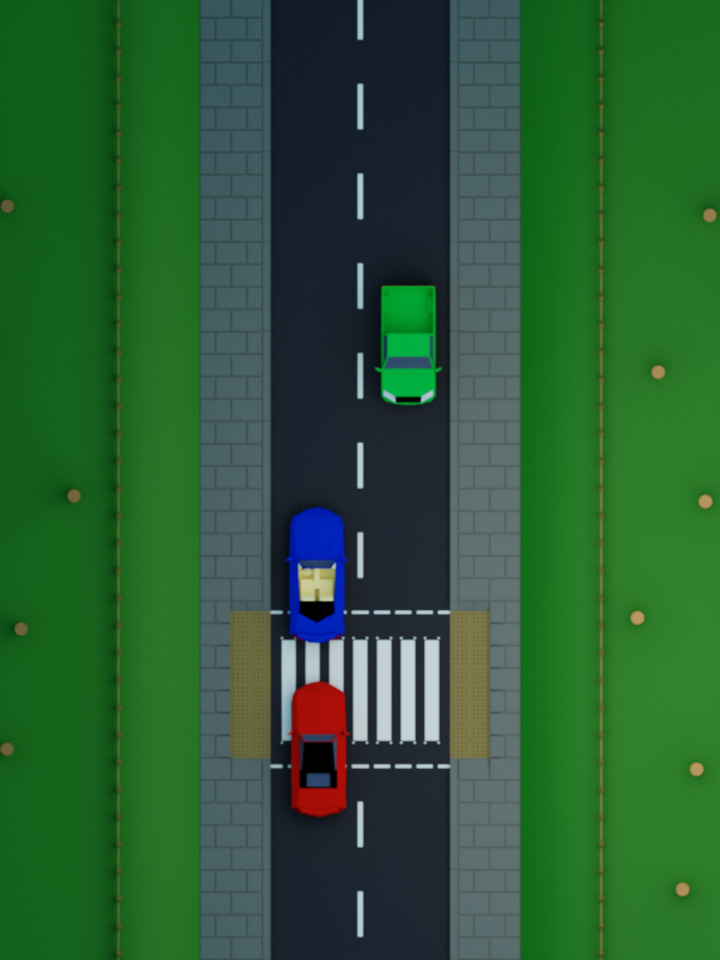}
    } \label{fig:scene_green}
    \subfigure[Counterfactual, neither action]{
    \centering
    \includegraphics[angle=90,width=0.23\textwidth]{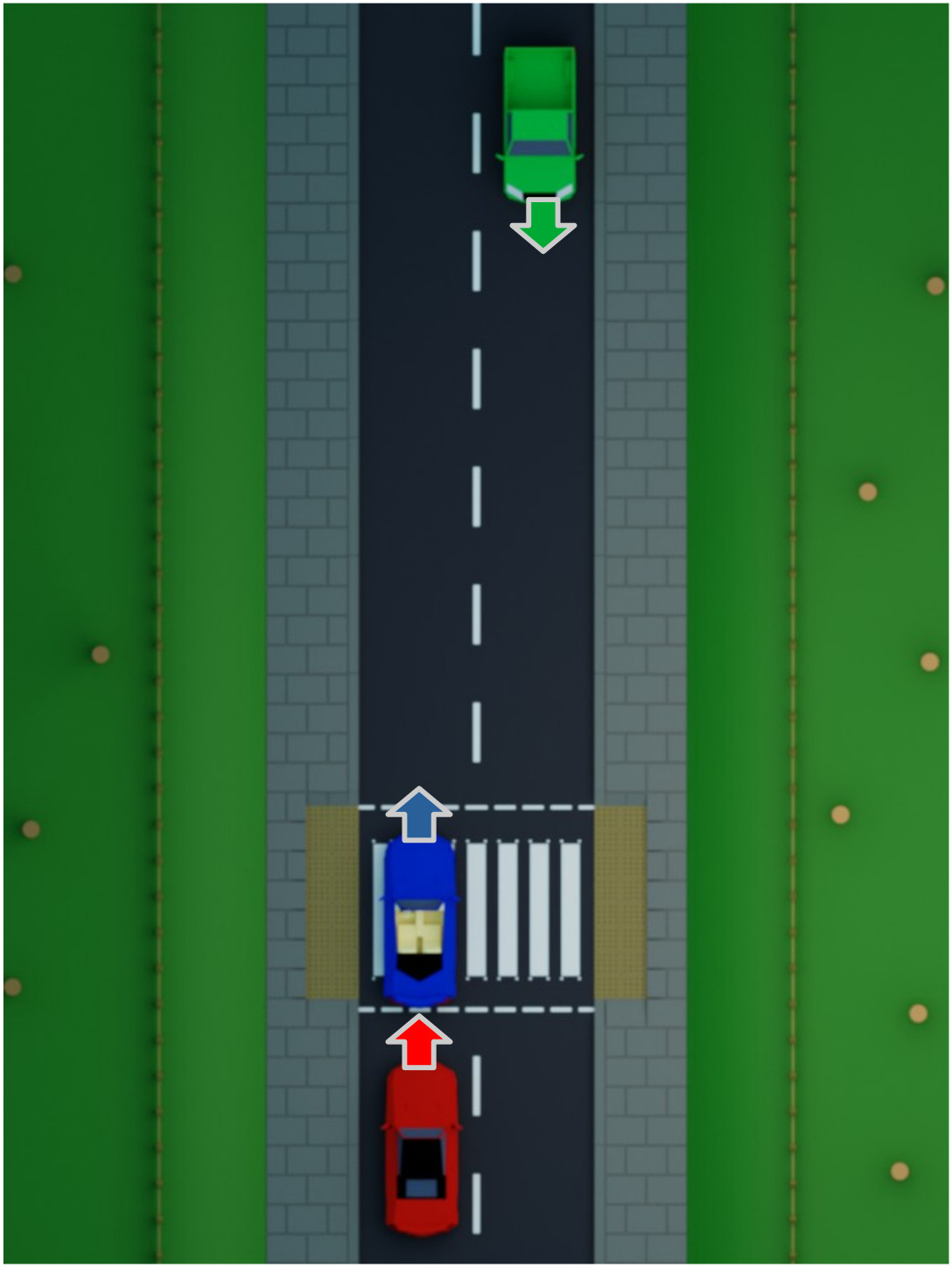}
    \includegraphics[angle=90,width=0.23\textwidth]{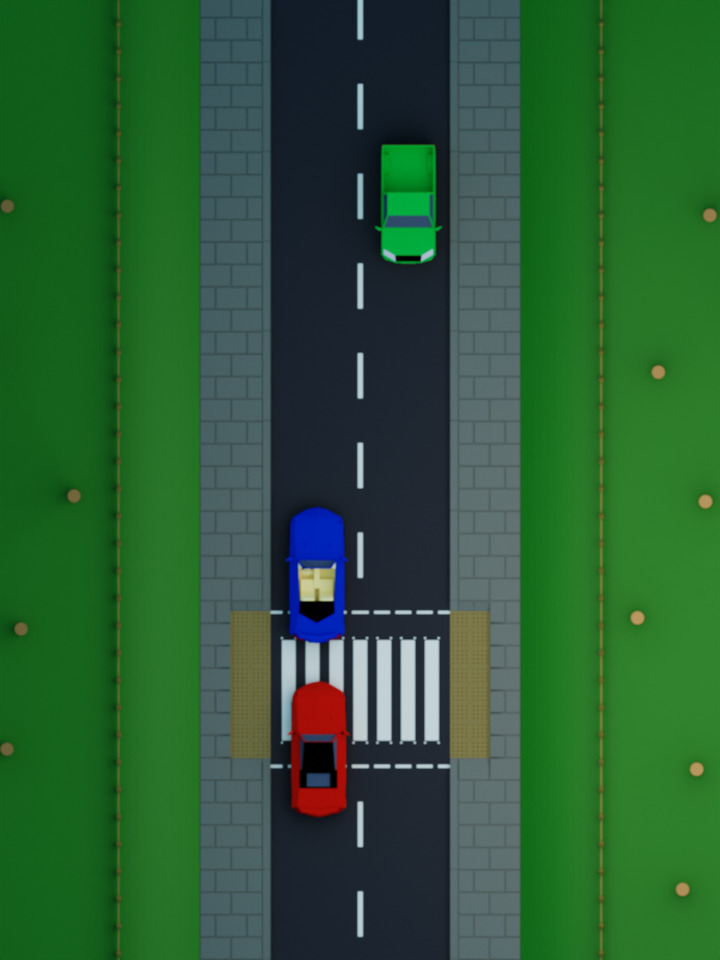}
    } \label{fig:scene_neither}

    \caption{Illustration of example fault attribution scenario. Subfigures show the initial scene with plan on the left and the final outcome on the right.}
    \label{fig:scenes}
    \vspace{-5mm}
\end{figure}

\section{Related Work} \label{sec:related_work}
As discussed in the previous section, causality has been identified as a critical component in explainable artificial intelligence (XAI) \citep{von2021transparency,adadi2018peeking,gunning2019darpa}, and more specifically explainable robotics \citep{hellstrom2021relevance}, allowing us to understand why decisions are made as they are. This in turn can be utilised in establishing accountability \citep{poechhacker2021algorithmic} or improving performance via self-extension \citep{marfil2024causalbased}. In this work we focus upon the integration of SCMs with AVs specifically due to the transparency they offer versus black-box methodologies such as neural networks (NNs). While causal modelling and discovery approaches have utilised methods such as variational autoencoders \citep{yang2021causalvae} and attention-based convolutional NNs \citep{nauta2019causal} to provide some level of explanation, they are at best able to ground such explanations in Granger causality \citep{granger1969investigating}. In an age where governments are moving to regulate artificial intelligence and legal cases surrounding it increase, we therefore aim to provide a means of constructing causally-grounded architectures with maximal transparency.

A promising direction for the integration of AVs and SCMs considers the integration of Markov decision processes (MDPs) with SCMs \citep{nashed2023causal,cannizzaro2023cardespot,triantafyllou2023towards}. MDPs are frequently used to model agents in AVs and thus these works represent a notable step in the right direction. However, of the aforementioned approaches of both \cite{nashed2023causal}, and \cite{cannizzaro2023cardespot} focus upon simple grid-world problems that concern only a single agent; while \cite{triantafyllou2023towards} do not focus upon embodied agents, and only capture agent interactions in terms of taking turns in two-player games.

Other approaches have utilised causal models similar to SCMs such as causal Bayesian networks for explanation, prediction and proactive behaviour in robotics \citep{diehl2023causalbased}, but these also focus upon a specific scenario considering a single agent. \cite{wich2022empirical} specify an SCM to identify key factors in hand selection when executing manipulation tasks, but once again said model is highly specialised to this purpose and only considers a single agent. \cite{madumal2020explainable} present a high-level SCM that seems to more generally capture agents playing \emph{StarCraft II} outside of a specific scenario, however each agent is represented via its own SCM rather than as part of a complete model and the system is disembodied. \cite{maier2024causal} provides a more detailed SCM with some modularity that even captures interactions between vehicles, the main limitation is that the SCM is specific to a simple advanced driver assistance system scenario.

There has also been some interest in representing ordinary differential equations through SCMs in order to capture dynamic systems \citep{mooij2013ordinary,bongers2021foundations}, however these methods have yet to be applied to autonomous vehicles to the best of our knowledge. These works achieve the representation of these equations in SCMs via the use of cycles, and for this reason we have opted to avoid utilising these techniques within this work, as our proposed formalisms and their associated example architectures are effectively acyclical --- a common requirement of many causal reasoning techniques.

We also set ourselves apart from work concerning the active discovery of causal relationships. \cite{castri2022causal} covers the discovery of causal relationships between agent variables within a multi-agent pedestrian environment. While such work can produce a causal graph, it is substantially more difficult to identify the structural equations pertaining to high-level relationships, and in some cases AV agent data may prove unsuitable for such techniques \citep{howard2023evaluating}. Instead we argue for utilising well established knowledge regarding kinematics / dynamics, control systems, and planning in order to design the causal graph and structural equations.

We argue this work provides the first comprehensive set of extensions that address challenges hampering the integration of SCMs into AVs. Although we cover these challenges in depth in Sec. \ref{sec:scms_in_aess}, obstacles relating to the temporal nature of AVs, as well as the lack of general system representation have been identified in previous works as reasons for avoiding SCM usage \citep{gyevnar2024causal}. In presenting our extensions, we aim to enable greater use of SCMs within AVs and foster discourse over remaining obstacles to such use.

\section{Structural Causal Models} \label{sec:structural_causal_models}
A popular means of representing causal relationships between variables is the SCM \citep{pearl2009causality,bareinboim2015bandits}. A SCM $M = \langle U, V, F, P(U) \rangle$ is comprised of both endogenous variables $V$ and exogenous variables $U$. Endogenous variables are those the model aims to describe and derive their values from the structural equations given in $F$. Meanwhile exogenous variables capture independent external factors that are not explicitly modelled, and thus derive their values from the joint probability distribution $P(U)$. We additionally introduce the function $Pa(\cdot)$ which gives the parent variables of an endogenous variable. In some cases the parents associated with a variable under discussion $V_i$ may have its parent variables $Pa(V_i)$ referred to as $V_{Pa}$ for brevity. %based upon it's associated structural equation.

\paragraph{Temporal Extension}
Many domains, including those relating to AVs make use of time series data or similar. Thus adaptations have been introduced to capture the temporal elements of causality in SCMs, with \cite{peters2017elements} giving a comprehensive overview of this in the context of temporal causal discovery. To summarise methods of temporally extending SCMs, we cover three common representations here:
\begin{itemize}
    \item \textbf{Summary Graph:} Utilises the original variables and establishes a causal link between variables if there is a causal link between said variables regardless of time lag. %Space complexity of $\mathcal{O}(|U| + |V|)$.
    \item \textbf{Full Time Graph:} Simply replicates each variable for the entire time period being modelled, thus allowing causal relations to be represented between any pair of variables each associated with any time step. %Space complexity of $\mathcal{O}(|T|(|U| + |V|))$, where $T$ is the total set of time steps modelled.
    \item \textbf{Window Graph:} Has $(\tau + 1)$ copies of each variable in order to represent causal relations with a maximum time lag of $\tau$. This relies upon the causal relations between the relative time steps of the variables within the window graph being consistent across all time steps modelled, i.e. causal stationarity \citep{runge2018causal}. %Space complexity of $\mathcal{O}({\Delta T}_{max}(|U| + |V|))$.
\end{itemize}

\section{Problem Definition}
\subsection{Autonomous Vehicle Architecture}

\begin{figure*}[t]
    \centering
    
    \subfigure[Overview of Autonomous Vehicle Architecture]{
        \includegraphics[width=0.63\linewidth]{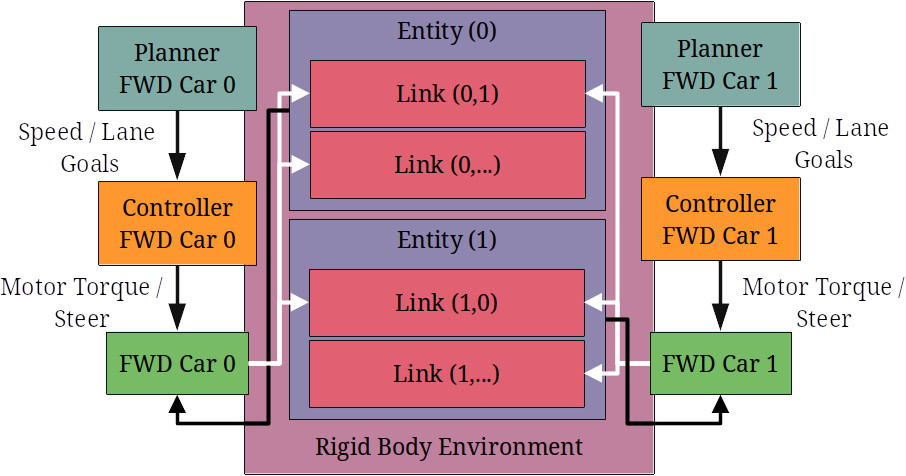}
        %\label{fig:autonomous_driving}
        \label{fig:overall_architecture}
    }
    %\qquad
    \subfigure[A\,\,Fault\,\,Attribution\,\,Graph]{
        \includegraphics[width=0.32\linewidth]{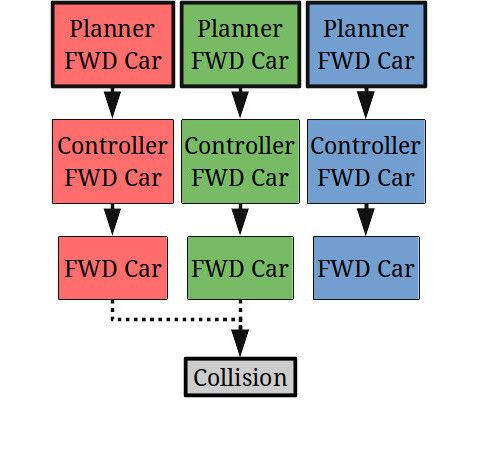}
        \label{fig:architecture_scenario}
    }
    
    %\caption{An overview of the example domain architectures. Solid and dotted lines represent composition and inheritance relationships respectively.}
    \caption{The left side depicts the overall AV SCM-based architecture, including AV agent interactions via links. The right side depicts the components of this architecture and how they interact causally with the collision scenario depicted in Fig. \ref{fig:scenes}.}
    \label{fig:both_architecture_figures}
    \vspace{-5mm}
\end{figure*}

Our primary motivation behind developing an SCM architecture capturing the interactions of AVs in driving scenarios is for post-hoc analysis. Critically, driving presents a safety-critical domain that nonetheless presents a greater degree of order in terms of the expectation that road agents follow the laws of the road. The combination of these traits makes it both appealing and convenient to model causally. An overview of the autonomous vehicle SCM architecture is depicted in Fig. \ref{fig:overall_architecture}.

The architecture represents individual AV agents via a planner object to provide high-level goal values, a controller to translate those goal values into continuous actuation values, and an object representing the embodied system that takes the actuation values as input. The embodied system representation object is further connected to an entity object that represents the embodied system within an environment shared with other agents. These entity objects in turn store a number of link objects that each capture interactions between the entity embodied system and another embodied system.

\subsection{Example Scenario} \label{sec:example_scenario}

An example vehicular scenario of the beneficial utilisation of SCM integration in post-hoc analysis is provided in Fig. \ref{fig:scenes}. The scene depicts a red vehicle overtaking and a green vehicle accelerating from the opposite direction, resulting in a collision. The causal links related to this are depicted in Fig. \ref{fig:architecture_scenario}. This shows both the actions of the red agent and the green agent were necessary to cause the collision, and while failures could in theory occur at the mechanical or controller level, we are interested in decisions made by the agent planners. 

Legally speaking the exact proportion of blame one can attribute to each of the agents will of course depend upon the temporal ordering of actions, the laws of the jurisdiction in which the crash occurred, and other matters of context. However, this work focuses purely on identifying how causal techniques can be used to establish agent action attribution rather than discussing the extent of culpability. We approach this by tying measures of causal necessity to the `but-for' test as laid out in the Model Penal Code \citep{ali1962model} in the manner described by \cite{pearl2018book}. This also bears resemblance to the form of actual causality between events as described by \cite{halpern2016actual}, albeit in the naive form without considering counterfactual contingencies over variables.

\subsection{Challenges for SCM Integration in Autonomous Vehicles} \label{sec:scms_in_aess}
%Before we can discuss how we ought to go about extending the SCM formulation to better integrate with AVs we need to establish the considerations that prompt such extensions. Here we describe these and link each to an associated extension which helps address it.

\subsubsection{Modularisation \& Encapsulation} \label{subsec:modularisation_and_encapsulation}
While SCM use in AVs remains rare, those works that do utilise SCMs typically either utilise a single monolithic SCM, or several entirely independent SCMs. Furthermore it is rare to store the data relating to an SCM within the structure of an SCM itself.
%This is a reasonable approach for experimental settings or when working in data science. However, effective system design would suggest we should aim to modularise \& encapsulate SCMs in order to minimise repetitions in the graphical structure, restrict data access based upon system design, and cater each module to a particular aspect of the system \citep{scott2009programming}. 
%While these are desirable qualities in system design across domains, AVs have in particular shown a preference for modular libraries that are easily ported and integrated across systems, a showcase of this being the robot operating system (ROS) \citep{macenski2022robot}.
SCMs are to some degree already modular by nature \citep{pearl2009causality}, as a causal graph may be cut and endogenous variables replaced with exogenous. However, this leaves no mechanism for separating SCM modules once connected, nor a means to control exogenous variable access. 
%We implement these capabilities primarily through the socket variables described in Sec. \ref{subsec:socket_variables}. These enable the dynamic composition and decomposition of multiple SCM modules. Furthermore, in terms controlling access to data / variables within the SCM modules themselves, this can be specified by the wrapper provided for the SCM (e.g. class accessors). This in turn requires the encapsulation of data within SCMs, something achieved through the buffer variables documented in Sec. \ref{subsec:buffer_variables}.

\subsubsection{Constant Space Representation of Time Series}
Window graphs offer a compact representation of temporally lagged causal relationships, particularly for AVs, which often base their current time step state purely upon their previous time step state --- i.e. they are memoryless. However, even the window graph representation requires a roll-out during inference to cover the relevant time period. This leads to the rolled-out graph taking up more space than the initial window graph, not to mention that inferences over different time periods usually each require their own independent roll-outs. This becomes especially complicated when one wishes to use the generative properties of SCMs to counterfactually simulate different outcomes, creating separate branches for each inferred history. 

%Lastly, the window graph roll-outs represents a problem for the design of SCM wrappers. If the window graph for an SCM captures the underlying model of the system then it is unclear whether performing a roll-out should mutate the original SCM window graph within the wrapper, or whether the roll-out operation should return a newly rolled-out SCM to manage separately. The former means losing the original window graph SCM, while the latter means managing a potentially large number of SCMs, neither of which is ideal.
%what occurs when we perform a roll-out? Are we provided with a new entity reflecting the rolled-out graph that we now must manage separately? Or is the original SCM mutated into the rolled-out graph, losing the formatting of the original SCM in the process? Neither of these provide a particularly elegant solution to the problem.

%Our solution is to utilise a recursive SCM formulation combined with the ideas of an SCM variable context and temporal variables introduced in Sec. \ref{subsec:variable_context} \& \ref{subsec:temporal_variables}. This ties in with the addition of buffer variables introduced in Sec. \ref{subsec:buffer_variables} as these variables utilise the aforementioned variable context in order to store data time-indexed within the SCM itself.

\subsubsection{Arbitrary Number of Agent Interactions}
When operating within their environments AVs will interact with various agents. The number and nature of these agents will often vary, thus an SCM capturing an AV should allow the handling of dynamic set of values. While the structural equations of an SCM can derive an output from an input set, the greater issue is being able to dynamically formulate such a set within an SCM. In instances where this set is derived directly from observation, this can simply be captured via an exogenous variable. However, we are considering a dynamic set of agents we may also wish to model within the SCM, and this needs to be able to vary with time.
%In particular, causal methods grounded in intentionalism \citep{dennett1971intentional} may wish to utilise a theory of mind for other agents when making counterfactual inferences. We solve this by proving that through a combination of our proposed extensions it is possibly to capture a dynamic set of agents while maintaining a form of causal stationarity Sec. \ref{subsec:parent_set_variables}. %This alone allows a multi-agent interactions to be captured post-hoc. However, to capture dynamic expansion of an SCM at runtime one can utilise a combination of temporal variables (Sec. \ref{subsec:temporal_variables}) and socket variables (Sec. \ref{subsec:socket_variables}).

\section{Extending Structural Causal Models for Autonomous Vehicles} \label{sec:extensions}

\subsection{Structural Equations with Side-Effects}
Before we can properly introduce any extensions to the SCM formalism, we first need to describe the process of integrating side-effects into the structural equations of an SCM. Side-effects in this context of this work refer to computations that occur beyond the purely mathematical calculations of structural equations (e.g. I/O, memory). In order to incorporate this into the SCM formalism, we take a similar approach to that typically observed in functional programming paradigms by utilising monadic values \citep{wadler1992essence}. The use of monads is widely documented and thus only a succinct description is given here. A monad is comprised of: a type constructor $\mathfrak{M}(\cdot)$; a unit / return function $u_\mathfrak{M}(\cdot): \mathcal{T} \rightarrow \mathfrak{M}(\mathcal{T})$ which wraps a value of type $\mathcal{T}$ with the monad type $\mathfrak{M}$; and a bind function $b_\mathfrak{M}(\cdot): \mathfrak{M}(\mathcal{T}) \rightarrow (\mathcal{T} \rightarrow \mathfrak{M}(\mathcal{T}^\prime)) \rightarrow \mathfrak{M}(\mathcal{T}^\prime)$ that applies a function to the wrapped value of the monad while propagating the monad forwards. The bind function is also occasionally denoted via a binary operator $\triangleright_\mathfrak{M}$ for a cleaner presentation.

\subsection{Variable Context} \label{subsec:variable_context}
A variable context $C$ is a set of meta-variables that are globally accessible by the structural equations of the SCM to which it belongs. Utilisation of such a concept must be handled with care, as one could inadvertently incorporate causal relationships via $C$ that would not be represented within the causal graph of the SCM. However, $C$ only captures the current time $C_T \in C$ and the time step size $C_{\delta t} \in C$, and these are only used to emulate a temporal roll-out and carry out other time-related computations. Thus $C$ does not invalidate SCMs that make use of it provided they do so via the structural equations described in Sec. \ref{subsec:temporal_variables}.
%Though as explained in Sec. \ref{subsec:temporal_variables} the use of these meta-variables is limited to emulating a temporal roll-out of the graph, and does not void the SCMs validity. %In Sec. \ref{subsec:summary} we prove that the use of a variable context $C$ does not pose an issue for the validity of the SCM causal graph, nor do any of the other extensions discussed here.

In theory we can utilise the well documented state monad \citep{wadler1992essence} --- denoted with $\mathfrak{S}$ where necessary --- to track the values of $C$ during inference. However, for the sake of simplicity in notation, and in order to focus upon the core contributions of this work, we will assume that $C$ is globally accessible and thus reference its meta-variables directly.

\subsection{Temporal Variables} \label{subsec:temporal_variables}
Temporal variables do not refer to a specific variable or structural equation but instead a class of endogenous variables that operate on, or with, time meta-variables. These can be further separated into variables whose computations rely upon meta-variables and previous time step (PTS) variables which establishing the temporal structure of the SCM. We will discuss the latter of these first, as they are essential to understanding temporal representation within the extended SCM formalism.

\subsubsection{Previous Time Step Variables}
We define a PTS variable as a type of endogenous variable $V_\delta \in V$ associated structural equation $f_{V_\delta} \in F$ defined as follows:
\begin{equation}
    f_{V_\delta}(V_{Pa}) = \delta_-\ \triangleright_\mathfrak{S}\ (\lambda\emptyset.\ V_{Pa})\ \triangleright_\mathfrak{S}\ \delta_+
\end{equation}
where $\emptyset$ denotes the empty set / value,  $\delta_- : \mathfrak{S}(\emptyset)$ is a monadic value, and $\delta_+(\cdot) : \mathcal{T}_{V_{Pa}} \rightarrow \mathfrak{S}(\mathcal{T}_{V_{Pa}})$ is a function. Importantly $\delta_+$ and $\delta_-$ utilise the action associated with their monadic value to increment and decrement the current time $C_T$ of $C$ by $C_{\delta t}$ respectively. This has the effect of shifting the current time to the previous time step, evaluating the parent variable $V_{Pa}$, before returning the current time to where it was initially. The function expects that the parent $V_{Pa}$ also possesses the monad $\mathfrak{S}$. If this is not already the case --- i.e. from $V_{Pa}$ having PTS variable ancestors of its own --- this can easily be achieved via use of the unit function $u_\mathfrak{S}(\cdot)$.

\begin{figure}[t]
    \centering

    \subfigure[Window Graph SCM]{
    \includegraphics[width=0.35\linewidth]{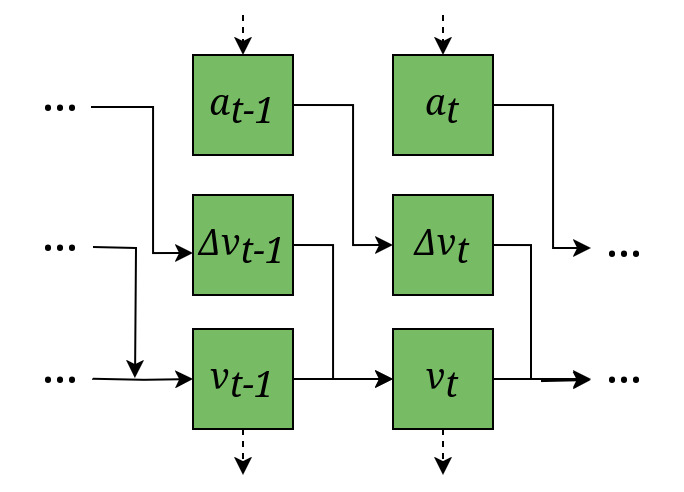}
    } \label{fig:time_series_scm}
    \ 
    \subfigure[Extended SCM]{
    \includegraphics[width=0.2\linewidth]{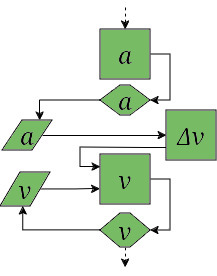}
    } \label{fig:system_scm}
    \ 
    \subfigure[Legend]{
    \includegraphics[width=0.28\linewidth]{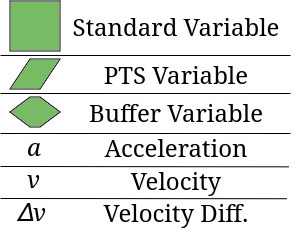}
    } \label{fig:kinematics_legend}

    \caption{The SCMs depict the calculation of velocity $v$ from acceleration $a$ by calculating the velocity difference $\Delta v$ and adding it to the velocity from the previous time step. A typical time series SCM would be coupled along with time series data, corresponding with time indexed variables --- indicated via subscripts. In contrast the extended SCM representation uses PTS variables and buffer variables. These allow the SCM a fixed-size graph while still capturing causal links between time steps, in addition to providing data encapsulation.}
    \label{fig:scm_structures}
\end{figure}

By utilising PTS variables we can represent temporal causal relationships in an SCM while maintaining the size and shape of the causal graph even during inference. Fig. \ref{fig:scm_structures} illustrates this by showing how temporal causal relations in kinematics can be captured via the introduction of recursive cycles of the causal graph, as opposed to performing a roll-out of a window graph during inference. Typically cycles would invalidate the SCM and break with assumptions made by many causal reasoning techniques. However the combination of graphical structure, PTS variables, and $C$ means that inference upon the SCM effectively emulates a roll-out while avoiding any modification to the SCMs graphical structure. Thus any casual reasoning method which can operate upon a window graph SCM should equally function upon an SCM utilising the proposed temporal representation (see App. \ref{app:pts_variable_recursion_proof} for proof). These variables are primarily utilised in the mechanical dynamics calculations that comprise the \emph{FWD Car}\,\footnote{Front-Wheel Drive (FWD)} component of the architecture depicted in Fig. \ref{fig:both_architecture_figures}.

Note that while the graphs we present do bear some resemblance to the extended summary graphs described by Assaad et al. \cite{assaad2022discovery} they are arguably closer to a memoryless window causal graph. Extended summary graphs split their variables into a `present' set, and an all-encompassing `past' set. Whereas our approach can be seen as splitting variables into a `present' set, and a `previous' set, which only concerns the previous time step and no further back. However due to our representation being defined recursively it can still encompass the whole history modelled by the SCM. The key difference here is that extended summary causal graphs eschew causal stationarity while we rely upon its presence. Similarly the memoryless nature of the graphs shares similarities to past work modelling dynamical systems via local independence graphs \cite{mogensen2018causal}. However, this work explicitly defines the structural equations of its SCMs and their randomness is captured via exogenous variables. Hence the techniques often associated with local independence --- which typically model Markovian processes --- are unsuitable, as all structural equations are deterministic, and exogenous variables are by definition independent of one another. Thus, in a sense a local independence graph representation is much closer to a summary graph than what we describe here.

\subsubsection{Time Step Size Product / Quotient Variables}
A time step size product (TSSP) variable $V_{\cdot\delta} \in V$ or time step size quotient (TSSQ) variable $V_{\div\delta} \in V$ simply aims to multiply or divide their input by the time step size $C_{\delta t}$ respectively. Formally, we can define their functions as follows:
\begin{equation}
    f_{V_{\cdot\,\,\!\delta}}(V_{Pa}) = V_{Pa} \,\cdot\,C_{\delta t}
\end{equation}
\begin{equation}
    f_{V_{\div\delta}}(V_{Pa}) = V_{Pa}\div C_{\delta t}
\end{equation}
Again these are primarily useful for carrying out the types of kinematics and dynamics calculations shown in Fig. \ref{fig:scm_structures} and utilised by the mechanical dynamics \emph{FWD Car} components of Fig. \ref{fig:both_architecture_figures}. While linear scaling may not be suitable for modelling all embodied system variables, it suffices in the context of this work for modelling AV dynamics in a similar manner to Kalman filtering.

\subsubsection{Time -- Current Time Difference Variables}
The slightly more complex time -- current time difference (TCTD) type variable $V_{\delta T} \in V$ calculates the duration between the input of the variable and the current time $C_T$ as follows:
\begin{equation}
    f_{V_{\delta T}}(V_{Pa}) = V_{Pa} - C_T
\end{equation}
TCTD variables are primarily useful within the \emph{Controller FWD Car} modules shown in Fig. \ref{fig:both_architecture_figures}, where they can be utilised to determine the duration of time between now and when a planned action goal value should be achieved by.

\subsubsection{Time Conditional Variables}
Lastly we introduce the time conditional type variable $V_{T?} \in V$ that take the current time $C_T$ and compare it against a predetermined time in order to select which parent from which to derive its output. Their structural equations are defined as follows:
\begin{equation}
    f_{V_{T?}}(V_{Pa}^0, V_{Pa}^1) = \begin{cases}
        V_{Pa}^0 & C_T < \theta_T\\
        V_{Pa}^1 & C_T \geq \theta_T
    \end{cases}
\end{equation}
where $\theta_T$ is a configurable time parameter specified during structural equation construction. Time conditional variables have multiple uses including dynamic introduction / removal of agents in environment interaction calculations. Furthermore though, they can be utilised for counterfactual simulation purposes, as one can effectively splice together a new counterfactual SCM together with an original SCM. From here $\theta_T$ can be used to dictate the divergence point between the original and counterfactual SCM histories. This is primarily used by the \emph{Planner FWD Car} modules of Fig. \ref{fig:both_architecture_figures} in order to envisage a series of alternate actions diverging at each action start time with the use of a time conditional variable.

\subsection{Buffer Variables} \label{subsec:buffer_variables}
Having introduced the variable context and explored how one can utilise temporal variables to interact with the meta-variables of $C$ we can now utilise this to store time-indexed variable data encapsulated within the SCM. A buffer variables $V_B \in V$ has a single parent, and as the name suggests they act as a buffer for this parent, storing data relevant to it. In order to do so a buffer variable $V_B$ maintains a time-indexed dictionary $D_{V_B} = \langle T_{V_B}, d_{V_B} \rangle$ where $T_{V_B} \subset T$ is the set of time steps with indexed data, and $d_{V_B}(\cdot): T \rightarrow \mathcal{T}_{V_B}$ is a function that maps a time step to a value or distribution for variable $V_B$ of type $\mathcal{T}_{V_B}$. With this established the structural equation for a buffer variable $f_{V_B} \in F$ is defined as follows:
\begin{equation}
    f_{V_B}(V_{Pa}) = \begin{cases}
        \text{\textsc{up}}_{\{\}}(d_{V_B}(C_T)) & C_T \in T_{V_B}\\
        \text{\textsc{up}}_{\{(D_{V_B}, C_T, V_{Pa})\}}(V_{Pa}) & C_T \not\in T_{V_B}
    \end{cases}
\end{equation}
Here $f_{V_B}(\cdot): \mathcal{T}_{V_{Pa}} \rightarrow \mathfrak{B}(\mathcal{T}_{V_{Pa}})$ either returns the output of its parent variable, or retrieves stored data, depending upon whether the current time $C_T$ is found within the dictionary time steps $T_{V_B}$. Importantly the output type of $V_B$ matches that of its parent $V_{Pa}$, denoted as $\mathcal{T}_{V_{Pa}}$, albeit with the addition of wrapping the type in the buffer monad $\mathfrak{B}$. The buffer monad takes a single type and provides a data constructor $\text{\textsc{up}}_{x}(\cdot)$ (i.e. update with $x$). The data constructor takes a set $x$ of tuples structured $(D, t, V)$ and for each tuple stores the output of $V$ at tuple creation
%--- at the time of tuple creation --- 
within the dictionary $D$ time-indexed to time step $t \in T$. In order to comply with monad requirements, we provide the following unit and bind functions:
\begin{equation}
    u_\mathfrak{B}(V) = \text{\textsc{up}}_{\{\}}(V)
\end{equation}
\begin{equation}
    b_\mathfrak{B}(\text{\textsc{up}}_x (V), f) = f(V)\,\triangleright_\mathfrak{B}\,(\lambda \text{\textsc{up}}_{x^\prime} (V^\prime).\,\text{\textsc{up}}_{x\,\cup\,x^\prime} (V^\prime)) 
\end{equation}
This monad bears resemblance to the writer monad \citep{jones1993composing} albeit with the addition of time-indexed writing to the specified buffer variable dictionaries.

Once again, we refer back to Fig. \ref{fig:scm_structures} to depict how buffer variables can be used to encapsulate SCM data within modules, in this case kinematics data. This is utilised throughout the components of Fig. \ref{fig:both_architecture_figures} in order to dictate where data should be stored for variables as an inherent part of the SCM structure.

\subsection{Socket Variables} \label{subsec:socket_variables}
As mentioned in Sec. \ref{subsec:modularisation_and_encapsulation} SCMs are inherently well suited to modularisation \citep{pearl2009causality}. Since exogenous variables capture factors outside of the model that nonetheless influence the model, if one then produces a model for these factors they can easily combine the SCMs together to form a new greater whole. The main limitations of the naive combination of SCMs is the lack of ability for system designers to set limits on how SCMs are joined, and the difficulty in dynamically separating SCMs.% without a means of representing where they are joined.

\begin{figure*}[t]
    \centering
    
    \subfigure[SCM without Socket Variables]{
        \includegraphics[width=\linewidth]{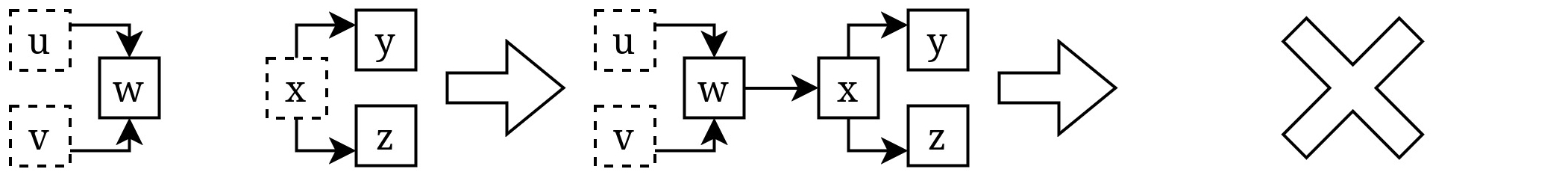} \label{fig:scm_merge_base}
    }
    
    \subfigure[SCM with Socket Variables]{
        \includegraphics[width=\linewidth]{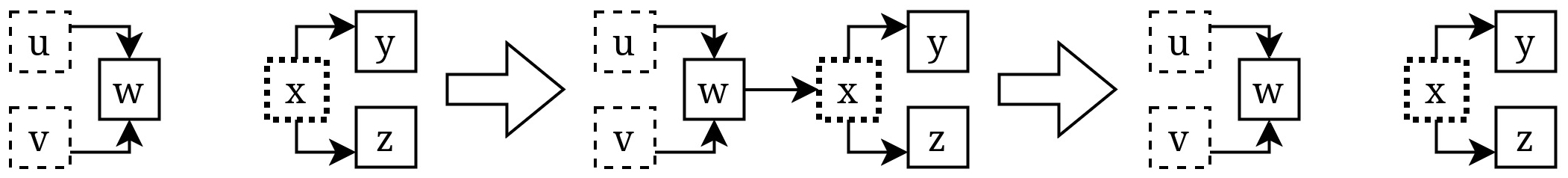} \label{fig:scm_merge_socket}
    }
    
    \caption{Illustration of merging and un-merging SCMs without and without socket variables. Here solid borders denote endogenous variables, dashes borders exogenous variables, and dotted borders socket variables.}
    \label{fig:scm_merge}
\end{figure*}

Thus we decompose the exogenous variables $U$ into socket variables $U^S$ and hidden exogenous variables $U^H$. A socket variables $U^S_i \in U^S$ differs from a typical exogenous variable in that its value is conditionally derived as follows:
\begin{equation}
    U^S_i \leftarrow
    \begin{cases}
        P(U^S_i) & |Pa(U^S_i)| = 0\\
        Pa(U^S_i) & |Pa(U^S_i)| = 1
    \end{cases}
\end{equation}
%where $f_S$ is a structural equation that derives a value for socket variable $U^S_i$. 
This allows one to join SCMs by assigning $U^S_i$ to take the output of a variable from another SCM. Importantly tracking socket variables separately ensures that the act of joining SCMs is reversible, allowing the dynamic reconfiguration of SCM-based modules. Note that the above function only accounts for cases of zero or one socket variable parents. This is because socket variables impose a maximum of one parent variable as a structural constraint on SCMs --- similar to endogenous variables allowing a maximum of one exogenous parent.

Fig. \ref{fig:scm_merge} illustrates this process. Here if we attempt to utilise SCMs without socket variables (see Fig. \ref{fig:scm_merge_base}), we merge SCMs $\langle U=\{u,v\}, V=\{w\}, F, P(U) \rangle$ and $\langle U^\prime=\{x\}, V^\prime=\{y,z\}, F^\prime, P(U^\prime) \rangle$ to get a new SCM $\langle U^{\prime\prime}=\{u,v\}, V^{\prime\prime}=\{w,x,y,z\}, F^{\prime\prime}=F\cup F^\prime, P(U^{\prime\prime}) \rangle$. However, there is no easy way to reverse the merge of SCMs back into their constituent components, at least not without allowing one to arbitrarily split an SCM at any endogenous variable. In contrast if we utilise SCMs with socket variables (see Fig. \ref{fig:scm_merge_socket}), we merge SCMs $\langle U^H=\{u,v\}, U^S=\{\}, V=\{w\}, F, P(U), P(U^S) \rangle$ and $\langle U^{H\prime}=\{\}, U^{S\prime}=\{x\}, V^\prime=\{y,z\}, F^\prime, P(U^\prime), P(U^{S\prime}) \rangle$ to form a new SCM $\langle U^{H\prime\prime}=\{u,v\}, U^{S\prime\prime}=\{x\}, V^{\prime\prime}=\{w,y,z\}, F^{\prime\prime}=F\cup F^\prime, P(U^{\prime\prime}), P(U^{S\prime\prime})) \rangle$. Because this formulation specifically identifies the socket variables --- and importantly maintains their probability distributions --- one can split the new SCM back into its constituent parts if desired.

Lastly, the previously described decomposition leaves us with a set of hidden exogenous variables $U^H$, which for all intents and purposes are normal exogenous variables. However, it is assumed that access to modify $P(U^H)$ is determined by system design following principles of encapsulation, hence they are `hidden' behind the SCM wrapper.

Socket variables are not so much used within the SCM components of Fig. \ref{fig:both_architecture_figures} but between them, functioning as an interface. One can quite easily see that it is possible to substitute $w$ and $x$ in Fig. \ref{fig:scm_merge} for the motor torque / steer output of a \emph{Controller FWD Car} component and input of a \emph{FWD Car} component respectively. Allowing their dynamic reconfiguration at runtime in a way that otherwise would not be possible.

\subsection{Retrospective Causal Stationarity with Mutable Input Sets} \label{subsec:parent_set_variables}
In order to capture a system of multiple agents interacting within a shared environment, it is necessary to be able to support the collation of a dynamic number of sources of input data into a set. It is trivial to construct a structural equation which takes input from a fixed number of agents provided the inputs are known. Thus the challenge is extending SCMs to allow the dynamic introduction and removal of SCM modules corresponding to agents entering and leaving the perceived environment.

%There are two approaches to tackling this, the most practical is to simply reconfigure the structural equations as appropriate while the overall SCM is actively in use. This arguably makes the most sense for online deployments as one cannot pre-construct the SCM to account for agents we are not aware of yet. Additionally deployments taking place over a protracted length of time will have the overall SCM continue to grow unless there is a mechanism for pruning irrelevant agent SCM modules.

The issue with this is that altering the structural equations of SCMs amounts to a breach of causal stationarity \citep{runge2018causal}, which poses an issue for window graph roll-outs, our own PTS variable temporal representation, as well as many inference techniques applied to SCMs. However, we can demonstrate that online mutation of an input set to a structural equation can be supported while maintaining a form of causal stationary:

\begin{definition}[Retrospective Causal Stationarity]
Let $T$ be defined in such a way that $\min T = 0$. A fixed SCM $M_t$ that accurately models causal relations for time $t \in T$ has retrospective causal stationarity (RCS) if it accurately models all previous time steps $T_{<t} = \{\,t^\prime\,|\,t^\prime < t,\,t^\prime \in T\,\}$. This effectively means that if $t$ is the most recently observed time step, $M_t$ can be utilised for inference without concern for a lack of causal stationarity.
\end{definition}

Assuming the most up to date SCM in use has RCS we can safely utilise the SCM across all past time steps while remaining faithful to the underlying data. Hypothetically it is also possible to use such an SCM for future time step prediction, but such inferences suffer the same caveats that most extrapolations do. In any case, App. \ref{app:mutable_input_set_proof} provides proof that it is possible to support the dynamic introduction and removal of set elements as input for structural equations. This provides a well defined process by which one can model a varying number of agent interactions via SCMs, a valuable trait for AVs operating in a shared environment. This also importantly demonstrates the utility of socket variables and time conditional variables in enabling this process to occur. This is primarily used within the components of Fig. \ref{fig:both_architecture_figures} to model the varying number of \emph{Link} interactions between \emph{FWD Car} components, each representing an agent on the road.

\section{Results \& Discussion} \label{sec:discussion}

While this paper offers first and foremost theoretical contributions we nonetheless wish to show the practical outcomes such work offers. Thus in addition to offering our code\,\footnote{\href{https://github.com/cognitive-robots/av-extended-scms-paper-resources}{\url{https://github.com/cognitive-robots/av-extended-scms-paper-resources}}} we also revisit the scenario depicted in Fig. \ref{fig:scenes} and Fig. \ref{fig:architecture_scenario}.

In order to do this, we consider tracks from the vehicular \emph{highD} dataset \citep{krajewski2018highd} and intervene upon the actions of one agent such that it ends up colliding with another agent. Through this we are able to produce a crash scene, which we can then post-hoc analyse. We do this by modulating each potential cause action and examine how this influences the presence of a crash. For further details on exactly how we approach the task of identifying causal links between agent actions, we refer the reader to the following literature \citep{howard2025generating} as this methodology is not the focus of this particular work. From here we discuss the qualitative results of applying this methodology to a highD scene and where our extensions have been utilised.

\begin{figure}[t]
    \centering

    \subfigure[Observed scene with collision]{
    \centering
    \includegraphics[width=0.235\textwidth]{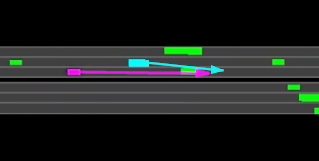}
    \includegraphics[width=0.235\textwidth]{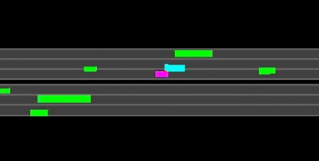}
    } \label{fig:scene_both_true}
    \subfigure[Counterfactual, no magenta brake]{
    \centering
    \includegraphics[width=0.235\textwidth]{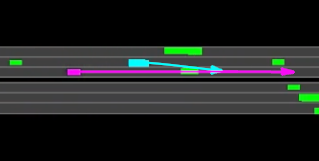}
    \includegraphics[width=0.235\textwidth]{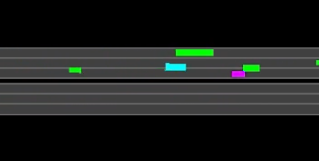}
    } \label{fig:scene_magenta_true}
    
    \subfigure[Counterfactual, no cyan merge]{
    \centering
    \includegraphics[width=0.235\textwidth]{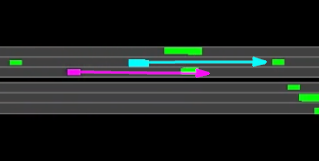}
    \includegraphics[width=0.235\textwidth]{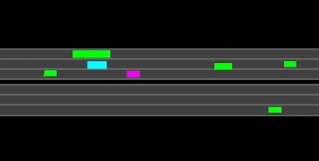}
    } \label{fig:scene_cyan_true}
    \subfigure[Counterfactual, neither action]{
    \centering
    \includegraphics[width=0.235\textwidth]{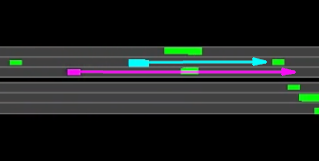}
    \includegraphics[width=0.235\textwidth]{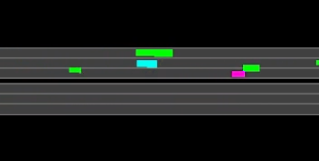}
    } \label{fig:scene_neither_true}
\vspace{-5mm}
    \caption{Visualisation of crash scene within SCM architecture. Subfigures show the initial scene with plan on the left and the final outcome on the right.}
    \label{fig:scenes_true}
\end{figure}

In Fig. \ref{fig:scenes_true} one can see a scenario analogous to the one shown in Fig. \ref{fig:scenes}, only this one is generated from data drawn from the \emph{highD} dataset and analysed counterfactually through the presented SCM architecture. In this scenario the cyan vehicle attempts to merge into a lane after magenta has begun braking, resulting in a collision. Note that this scenario differs from the one described in Sec. \ref{sec:example_scenario}. This was a deliberate decision in order to demonstrate the flexibility of the architecture in not being constrained to one particular scenario, as has been the case in some previous works \cite{maier2024causal}. 

By modulating the actions of each agent via counterfactual interventions as described above we can attempt to identify the culpable agent. Conveniently, because the agent actions each relate to specific points in time we can rely upon the properties of temporal precedence and the existence of reaction times to avoid the formation of any cycles in the resulting causal graph. This does of course depend upon the length of time steps between data measurements being less than what one might expect in terms of a reaction time. For \emph{highD} this is $40\,ms$, far below the average reaction time of a human --- i.e. $\sim220\,ms$ \citep{laming1968information} --- avoiding any concern.

Through these simulations we can infer that not only would the crash not have occurred had cyan not attempted to merge, but if magenta had not braked then it would have collided with another vehicle instead. Thus at least from a decision-making standpoint, cyan appears to carry greater culpability than magenta within the scenario.

Regarding how this example utilises the paper's contributions, consider the following:
\begin{itemize}
    \item Use of PTS variable temporal representation combined with sparse data storage with buffer variables keeps memory storage for a large number of agents tractable.
    \item In a similar manner, when processing a large number of counterfactual queries potentially using multiple different controllers and planners, socket variables make the reuse of SCM modules easier, and time conditional variables enable the reuse of data and management of multiple timelines.
    \item The retrospective causal stationarity proof for mutable input sets, combined with socket variables and time conditional variables, allow for a dynamically changing number of elements to be represented across a timeline --- in this case AV agents.
\end{itemize}
While each one of these contributions might offer a smaller contribution, their gestalt offers many new tools that can be used within the AV space and beyond. We argue that in addition to future work making use of the causal formalisms and code presented here, it would be beneficial to explore integrating causality into other autonomous embodied systems.

\subsection{Conclusion}
To summarise, the presented extensions represent a step forward for both the expressive capabilities of SCMs within causality research as well as work integrating AVs with causality. In building such systems we continue the push towards autonomous systems that are transparent, accountable, and understandable. Through these traits we can best ensure that the technologies of tomorrow are developed responsibly.

\acks{This work was supported by the EPSRC project RAILS (grant reference: EP/W011344/1) and the Oxford Robotics Institute research project RobotCycle.}

\bibliography{references}

\appendix

\section{Glossary of Technical Terms}
Given the cross-disciplinary nature of this work, there may be terms present which are not familiar for researchers primarily focused upon causality. Thus for the benefit of the reader we provide the following short summaries of technical terms used within the paper.

\subsection{General Computing Theory}
\begin{itemize}
    \item Space Complexity: Describes how the memory or storage requirements of a function varies with its input parameters.
    \item Tractable: Describes a function can feasibly be executed for given set of inputs based upon the associated time and data memory / storage requirements.
    \item Memoryless: In the context of discrete time-indexed states, a state can be described deterministically or stochastically purely in terms of the previous state.
    \item Recursion: Describes when a function calls itself as part of its operation.
    \item Meta-Variable: A variable that exists outside of the models / systems where it is used to define or implement basic functionality. Typically utilised in an auxiliary role, e.g. the number of threads a computation uses will vary how it is run, but ultimately the results of the computation will be the same.
\end{itemize}

\subsection{Robotics / Autonomous Vehicles}
\begin{itemize}
    \item Agent: An independent decision maker, e.g. a human, a robot.
    \item Embodied Agent: An agent that has a physical presence with which it can interact in and with the world. For example, AI agents that purely analyse existing data --- such as those used in the medical sector to make treatment decisions --- are not embodied.
    \item Grid-World Problems: A type of task or problem-solving scenario in which the world is simplified into a grid. Within such a world states and actions are also defined in terms of the grid. This is often done to simplify a task, as transforms the state and action set into discrete sets which may be easier to work with.
    \item Advanced Driver Assistance System: Commonly known by its acronym `ADAS' and often associated with automation levels 1 and 2 as defined by the \cite{sae2021taxonomy}, an ADAS provides some level of automation while still requiring the driver to be aware and ready to assume full control. Most frequently this involves maintaining a desired speed while avoiding collisions and remaining within speed limits, although newer systems also manage steering and lane control.
    \item Front-Wheel Drive: Describes a vehicle in which the motor is connected to and delivers torque through the front wheel axel. Note that this acronym is not defined in the main text due to all instances 
    \item Kalman Filtering: A technique for estimating the values of noisy variables across time \citep{kalman1960new}. It does this by incorporating models that describe how the state changes across time, how values the system itself measures or senses values based upon the true values, and the noise associated with both of these processes. Optionally it can also consider how actuation values affected by the system affect the true values.
    \item Goal Value: Within the context of this paper, goal values comprise actions and correspond to sensory and / or internal variables, for which they provide a target. These are typically associated with more high-level variables such as speed or lane in the context of AVs.
    \item Actuation Value: These are values through which an agent interacts with its embodied form.  These are typically associated with more low-level variables. In the context of AVs these might be something like motor torque or steering.
    \item Controller: Takes one or more high-level goal values and varies actuation values in response in an attempt to bring sensory and / or internal variables closer to their corresponding goal values.
    \item Planner: Considers the state of the embodied agent and the world and decides upon some actions, typically based upon some measure of reward --- i.e. how beneficial an action or state is from the perspective of a given agent --- or overall objective.
\end{itemize}

\subsection{Software Engineering}
\begin{itemize}
    \item Monolithic: Describes software architectures that compile all functionality into a single comprehensive unit.
    \item Modularisation: A software design technique that advocates splitting code into several modules that each handle a particular aspect of the overall system functionality.
    \item Encapsulation: A software design technique that advocates storing data along with the functions that act upon said data within the same structures. This approach also typically suggests that access to said data should only be possible via the same aforementioned functions.
    \item Wrapper: When speaking of encapsulation, a wrapper describes the structure that encapsulates a specific set of data elements. This is particularly the case when discussing access control for said data, which is typically managed through some form of interface the structure has. Note that this has nothing to do with wrapping in the context of monads.
    \item Self-Extension: This describes when a system is able to extend its own functionality or capabilities based upon data acquired over the course of its operation. Note that this is different to, but not exclusive from machine learning. For example, machine learning could be applied to help predict disease, and while it may achieve this for the target disease, it will not attempt to start learning to predict other diseases. By contrast, an AV might be programmed in such a way that it learns to operate within environments other than those for which it was engineered, thus demonstrating self-extension.
\end{itemize}

\subsection{Functional Programming Monads}
\begin{itemize}
    \item Type Constructor: Takes a selection of input types and creates a new type as output. For example, a list type constructor could take the integer type as input and produce the integer list type as output.
    \item Side-Effect: Computations that occur within a function outside of purely deriving the output given the input values. These can correspond to file input / output, random number generation, and more.
    \item Monad: Within functional programming monads are able to store additional information besides just the values passed to and from functions. Hence this additional information is `wrapped' around values as they are passed between functions. While they have a variety of uses, one use is to have them represent side-effects in order to facilitate such features within an ecosystem otherwise comprised of deterministic functions. In the context of this paper, these deterministic functions are the endogenous variables of SCMs, which can utilise side-effects in order to provide additional functionality to the overall system. There are several more formalised aspects of monads, however these are detailed in the main text.
    \item Monadic Value: This describes the value returned after wrapping an original value in a monad. This may  consist purely of additional information, or it could have additional computation associated with it too.
\end{itemize}

\section{PTS Variable Recursion Proof} \label{app:pts_variable_recursion_proof}

Although the concept of PTS variables was introduced in the main text, we aim to illustrate here why the use of PTS variables, and the form of recursive structure they rely upon, does not pose an obstacle to utilising existing techniques from literature. Consider Fig. \ref{fig:scm_structures_rollout}, in which the two SCMs from Fig. \ref{fig:scm_structures} are rolled out for time steps 0--2 inclusive. Given that PTS variables are only responsible for managing $C_T$ and buffer variables just cache values indexed to $C_T$ --- in other words fulfilling utility roles --- the structure of the roll-outs are otherwise identical. Thus any series of operations one could perform on Fig. \ref{fig:time_series_scm_rollout} could be mapped directly to \ref{fig:system_scm_rollout} without issue. Ultimately the key difference between these roll-outs is that in Fig. \ref{fig:time_series_scm_rollout} each variable is replicated across time steps based upon the window graph SCM, whereas in Fig. \ref{fig:system_scm_rollout} the variables are re-used. In essence, the extended SCM structure utilising PTS variables virtually emulates the replication of variables by maintaining the current time step as a meta-variable $C_T$, which can be utilised by buffer variables to time-index data. The key benefit this offers is not only in avoiding having to actually replicate the original SCM variables, but being able to maintain the same structure --- as indicated by the double-ended arrow --- which may be beneficial for certain system designs.

\begin{figure}[t]
    \centering

    \subfigure[Window Graph SCM]{
    \includegraphics[width=0.8\linewidth]{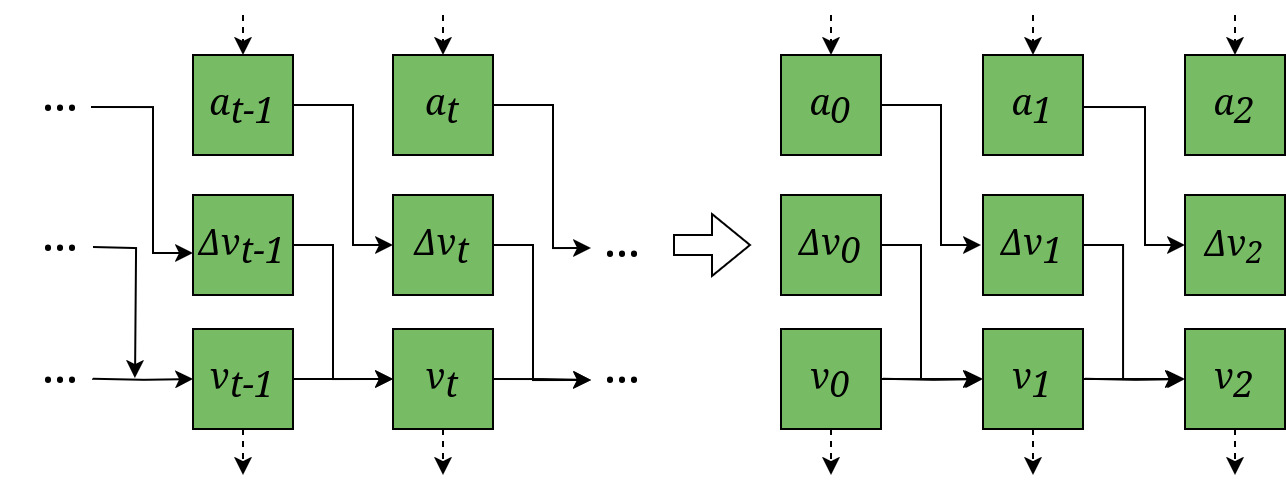}
    } \label{fig:time_series_scm_rollout}
    \\
    \subfigure[Extended SCM]{
    \includegraphics[width=\linewidth]{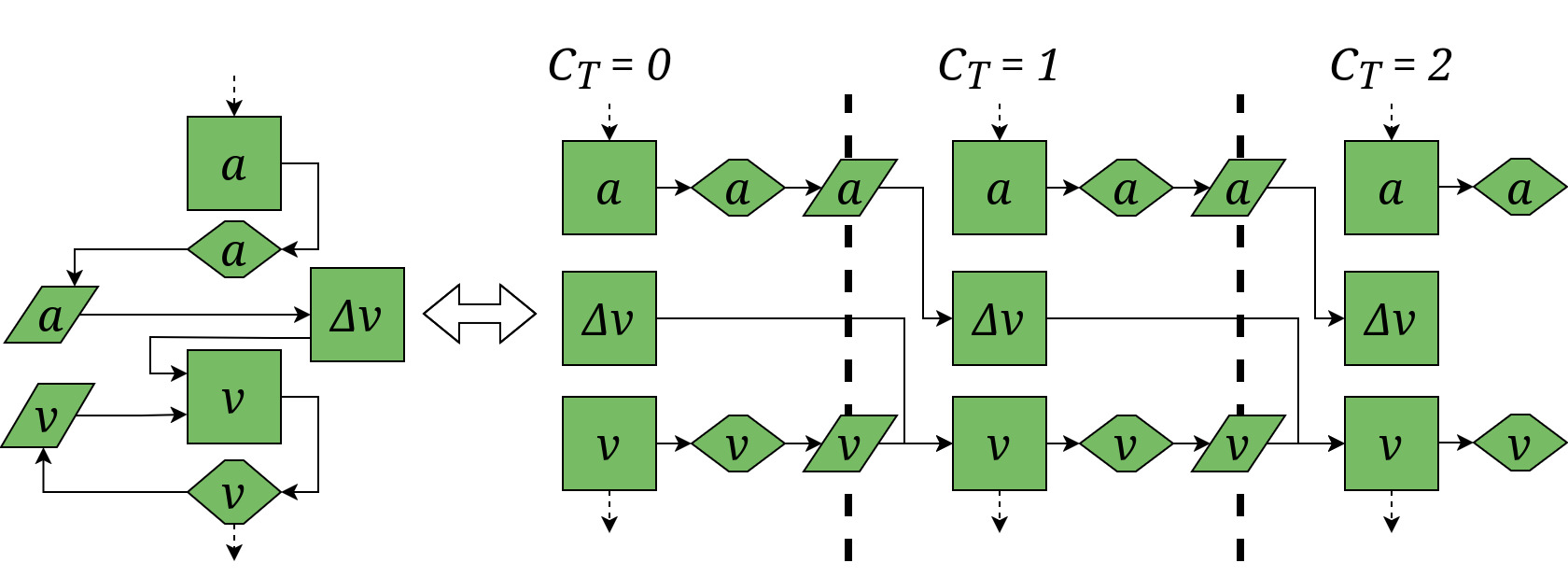}
    } \label{fig:system_scm_rollout}

    \caption{Depicts the rollout of the SCMs in Fig \ref{fig:scm_structures} for time steps 0--2 inclusive.}
    \label{fig:scm_structures_rollout}
\end{figure}

\section{Mutable Input Set RCS Proof} \label{app:mutable_input_set_proof}
Here we define a theorem that supports the use of a dynamic set of inputs for a structural equation:

\begin{theorem}[Mutable Input Set RCS] \label{theorem:parent_set_variables}
One can capture the dynamic introduction and removal of sources of type $\mathbb{N} \times T$ used together as an input set for a variable $V_i$ with structural equation $f_{V_i}(\cdot) : 2^{\mathbb{N} \times T} \rightarrow \mathcal{T}_{V_i}$ via series of SCMs $\{ M_0, M_{\delta}, ..., M_t \}$ that each provide retrospective temporal stationary. 
\end{theorem}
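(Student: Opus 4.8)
The plan is to proceed by induction over the series $\{M_0, M_\delta, \dots, M_t\}$, treating each successive SCM as the previous one augmented with structure that gates a newly introduced or removed source by its entry/exit time. The central idea is to never rewrite the structural equations $F$ as agents come and go --- which would breach causal stationarity --- but instead to bake each introduction/removal time into a time conditional variable $V_{T?}$ whose threshold $\theta_T$ equals that event time. Since the only quantity that then varies across time is the meta-variable $C_T \in C$ rather than the equations themselves, each fixed $M_t$ can reproduce the correct input set at every past step, which is exactly what RCS demands.

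First I would fix the construction. For each source $s_j$ of type $\mathbb{N} \times T$ I would route its contribution into the collating variable $V_i$ through a time conditional variable: for an introduction at time $\theta_j^{\mathrm{in}}$ the parents $V_{Pa}^0, V_{Pa}^1$ are chosen so the variable emits $\emptyset$ for $C_T < \theta_j^{\mathrm{in}}$ and the source's singleton $\{s_j\}$ with $s_j \in \mathbb{N} \times T$ for $C_T \geq \theta_j^{\mathrm{in}}$; a removal at $\theta_j^{\mathrm{out}}$ is the mirror image, and a source that both enters and leaves is handled by composing two such gates. The structural equation $f_{V_i} : 2^{\mathbb{N} \times T} \to \mathcal{T}_{V_i}$ then consumes the union of these gated contributions, with empty contributions vanishing under union, so that the realised input set at any queried time is exactly the set of sources active at that time. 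The socket variables $U^S$ provide the mechanism by which each source module is attached to (or, reversibly, detached from) $V_i$, so that moving from $M_{t-\delta}$ to $M_t$ amounts to slotting in a new gated module without disturbing the rest of the graph.

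Then I would establish the inductive invariant that every $M_t$ has RCS. The base case $M_0$ is immediate, since $\min T = 0$ leaves no earlier steps to model. For the step, I would assume $M_{t-\delta}$ faithfully models every $t' \le t-\delta$ and show the augmentation to $M_t$ preserves this while extending faithfulness to $t$. The key observation is that, because the gates are driven purely by comparisons of $C_T$ against fixed thresholds, evaluating $M_t$ at any past $C_T = t'$ drives every gate to the same branch it would have taken in the SCM valid at $t'$: sources introduced after $t'$ emit $\emptyset$ and so are invisible, while sources removed after $t'$ still emit their singleton. Hence the set $f_{V_i}$ receives at $t'$ is precisely the ground-truth active set, and the equations being identical across time means the recursive PTS temporal evaluation of App.~\ref{app:pts_variable_recursion_proof} still rolls out consistently and acyclically.

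The hard part will be arguing rigorously that this gating via $C_T$ is genuinely equivalent to the dynamic editing of the input set it replaces --- i.e.\ that a single fixed graph with time conditional gates reproduces, at every $t' \le t$, exactly the set that an explicitly time-varying family of equations would have produced --- and that introducing these gates never creates a cycle nor disturbs the threading of $C_T$ through the PTS recursion. I expect the cleanest way to discharge this is to show that the map from event times to gate branches induces, at each source, a well-defined partition of $T$, so that the union $\bigcup_j(\mathrm{gate}_j)$ is a deterministic function of $C_T$ alone. Causal stationarity in the RCS sense then follows because no structural equation is ever rewritten between $M_{t-\delta}$ and $M_t$ --- the graph is only appended to via a socket --- leaving $C_T$ as the sole carrier of temporal variation.
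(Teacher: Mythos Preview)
Your proposal is correct and mirrors the paper's proof: both proceed by induction with the trivial base case at $M_0$, and both gate each source through a pair of time conditional variables (the paper's $V^\alpha_{T?}, V^\omega_{T?}$ are your composed in/out gates) attached via socket variables, arguing that evaluating the fixed $M_t$ at any past $t'$ yields $\emptyset$ from sources introduced after $t'$ and the correct singleton from sources removed after $t'$. The only mechanical difference is that the paper advances $\theta^\omega_T$ to the current step at every step a source remains present---so removal is the \emph{absence} of an update rather than a one-shot assignment of $\theta_j^{\mathrm{out}}$ at the exit event---but the resulting gate semantics, and hence the RCS argument, are identical.
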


Here the set of natural numbers $\mathbb{N}$ is used to uniquely identify contributions from different sources (e.g. agents) and importantly allows the same value of type $T$ to be captured several times within the input set.

We first describe the structure utilised within the SCM in order to construct sets from variable outputs. The input to $V_i$ is incrementally built up from the combination of: 
\begin{itemize}
    \item A socket variable $U^S_\emptyset$ where the probability distribution $P(U^S_\emptyset)$ is a degenerate distribution that always returns the empty set $\emptyset$.
    \item A union variable $V_\cup$ that has two parents providing input sets of type $2^{\mathbb{N} \times T}$ and outputs the union of said sets. The first of these parents provides a singleton set from a given source (e.g. an agent), while the second parent is a socket variable as described above.
\end{itemize}
This effectively allows the system to iteratively construct a set beginning with $U^S_\emptyset$ providing an empty set. From here one can connect a union variable $V^\prime_\cup$ in order to incorporate a new input into the set, while providing its own socket variable $U^{S\prime}_\emptyset$ from which additional another union variable could be connected in future. This structure allows for the dynamic introduction and removal of input sources, as it effectively emulates a linked list.

We can now consider the following basic lemma:

\begin{lemma}[Base Case] \label{lemma:first_model}
Provided the SCM $M_0$ at the first time step is accurate for said time step, it has retrospectively causal stationarity.
\end{lemma}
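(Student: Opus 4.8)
The plan is to unfold the definition of retrospective causal stationarity directly and observe that, at the initial time step, its requirement collapses to a statement over an empty index set. By the definition, an SCM $M_t$ that accurately models time $t$ possesses RCS precisely when it also accurately models every earlier time step in $T_{<t} = \{\,t' \mid t' < t,\ t' \in T\,\}$. For the base case I would instantiate this with $t = 0$, so that the only remaining obligation is to verify that $M_0$ accurately models every $t' \in T_{<0}$.

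The key step is to invoke the standing hypothesis $\min T = 0$. Since no element of $T$ is strictly less than its minimum, we have $T_{<0} = \emptyset$. Consequently the universally quantified condition ``$M_0$ accurately models $t'$ for all $t' \in T_{<0}$'' is \emph{vacuously} satisfied, as there are no such $t'$ for which it could fail. Combining this with the lemma's own hypothesis --- that $M_0$ is accurate at time $t = 0$ --- both clauses of the RCS definition hold, and hence $M_0$ has RCS.

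I expect essentially no technical obstacle here: this is the trivial base case of an induction whose inductive step is carried out separately for $M_{\delta}, \dots, M_t$ in establishing Theorem~\ref{theorem:parent_set_variables}. The only point requiring minor care is the correct handling of the vacuous quantification, namely confirming that the RCS definition imposes no condition on $M_0$ beyond accuracy at the present step and accuracy over the (here empty) set of prior steps. Once this is noted, the result follows immediately, and the real work is deferred to the inductive step that propagates RCS across set mutations.
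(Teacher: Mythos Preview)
Your argument is correct and matches the paper's own proof essentially line for line: both observe that $T_{<0} = \emptyset$ (since $\min T = 0$) and conclude that the RCS requirement over prior time steps is vacuously satisfied. There is nothing to add or change.
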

\begin{proof}[Proof of Lemma \ref{lemma:first_model}]
Given that $M_0$ accurately models causal relations for $t = 0$, and $T_{<0} = \emptyset$, it automatically follows that $M_0$ accurately models time steps in $T_{<0}$.
\end{proof}

So that we may capture the dynamic nature of the input set sources, rather than having an input source feed directly into a union variable $V_\cup$, it should instead feed through a pair of time conditional variables $V^\alpha_{T?}$ and $V^\omega_{T?}$. These variables should only return the input set source for $\theta^\alpha_T \leq T \leq \theta^\omega_T$ where $\theta^\alpha_T$ and $\theta^\omega_T$ are configurable time parameters for each of the aforementioned time conditional variables respectively. Should $t$ fall outside of these bounds, the time conditional variables should return the output of a fixed exogenous variable $U_\emptyset$ where $P(U_\emptyset)$ is a degenerate distribution that always returns the empty set $\emptyset$.

%Finally, in order to demonstrate that all subsequent SCMs also have retrospectively causal stationarity, we show that the two means of modifying the overall SCM --- i.e. the introduction of an input set source and the removal of an input set source --- can be carried out while maintaining RCS.

\begin{lemma}[Introduction Induction Step] \label{lemma:introduction}
If $M_{t-\delta}$ is an SCM with RCS and at least one new input set source is introduced at the next time step $t$ then one can construct a new SCM $M_t$ that accurately captures causal relationships at time step $t$ and has RCS.
\end{lemma}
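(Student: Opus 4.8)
The plan is to prove Lemma \ref{lemma:introduction} constructively, in the same inductive spirit as Lemma \ref{lemma:first_model}: I will exhibit exactly how $M_t$ is obtained from $M_{t-\delta}$ using the linked-list set-construction described above, and then verify the two required properties — accuracy at the new time step $t$ and RCS — as separate checks. For the construction, I would start from $M_{t-\delta}$ and locate the current tail socket variable $U^S_\emptyset$ of the chain feeding $V_i$, i.e. the degenerate socket that presently supplies $\emptyset$ at the end of the union chain. To that socket I would graft a new union variable $V_\cup$ whose first parent is the newly introduced source, routed through the gating pair $V^\alpha_{T?}$ and $V^\omega_{T?}$ with $\theta^\alpha_T = t$ and $\theta^\omega_T$ chosen so that $t \leq \theta^\omega_T$, and whose second parent is a fresh degenerate socket variable $U^{S\prime}_\emptyset$ that becomes the new tail for any future introduction. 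Because the attachment is made through socket variables, the operation is reversible and leaves every other part of $M_{t-\delta}$ structurally untouched.

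Next I would establish accuracy at time $t$. At $T = t$ the gate opens, since $\theta^\alpha_T = t \leq T \leq \theta^\omega_T$, so $V^\alpha_{T?}$ and $V^\omega_{T?}$ pass the new source's singleton through to $V_\cup$. The input set presented to $V_i$ is therefore the union of the set that was active in $M_{t-\delta}$ — every previously introduced source whose removal window has not yet closed — with the new singleton. Since $f_{V_i}$ is the fixed structural equation and, by causal stationarity, the unchanged portion of the graph evaluates at $t$ exactly as the same equations evaluate at earlier steps, $V_i$ receives precisely the correct active set at $t$, so $M_t$ models time step $t$ faithfully.

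Finally I would establish RCS. For any $t^\prime < t$ — equivalently $t^\prime \leq t - \delta$ under the discrete step $\delta$ — the gate is closed, so $V^\alpha_{T?}$ returns $U_\emptyset = \emptyset$; the new union then outputs $\emptyset \cup \emptyset = \emptyset$, and the value flowing into the old tail socket is identical to its value in $M_{t-\delta}$. Appealing to the identity $S \cup \emptyset = S$ and to the fact that no other variable was altered, every variable of $M_t$ evaluates exactly as in $M_{t-\delta}$ for all $t^\prime < t$. Since $M_{t-\delta}$ has RCS it is accurate on precisely these time steps, so $M_t$ inherits that accuracy, which together with accuracy at $t$ is exactly RCS for $M_t$.

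I expect the main obstacle to be the accuracy-at-$t$ step rather than the RCS step. There I must argue carefully that the mutated input set equals the true active set at $t$ — that gating with $\theta^\alpha_T = t$ inserts the new source at exactly the right moment while the still-open windows of every previously active source are respected — and that invariance of $f_{V_i}$ together with causal stationarity transfers correctness of the input set to correctness of the output. The RCS step, by contrast, reduces to the two elementary observations that the gate is closed in the past and that union with the empty set is the identity, so the grafted structure is genuinely invisible for $t^\prime < t$.
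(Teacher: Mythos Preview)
Your proposal is correct and follows essentially the same approach as the paper: graft the new source onto the tail of the union/socket chain, gate it with $\theta^\alpha_T = t$, then check that the gate is open at $t$ (accuracy) and closed for all $t' < t$ so the grafted branch contributes $\emptyset$ (RCS via $S \cup \emptyset = S$). The only discrepancy worth flagging is that the paper initialises $\theta^\omega_T = t$ exactly, not merely $\theta^\omega_T \geq t$ as you allow; this does not affect the present lemma, but the paper's proof of Lemma~\ref{lemma:removal} relies on that specific initialisation, so for the overall induction to go through you should pin $\theta^\omega_T = t$ here as well.
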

\begin{proof}[Proof of Lemma \ref{lemma:introduction}]
Construct $M_t$ by extending the SCM to incorporate the new input set sources as described above, assign $\theta^\alpha_T = t$ and $\theta^\omega_T = t$ for their time conditional variable parameters. Based upon the configuration described above, $\theta^\alpha_T \leq t \leq \theta^\omega_T$ and thus $M_t$ will include the new input set source, accurately capturing the causal links at time $t$. %This, combined with the updates to $\theta^\omega_T$ at each time step as discussed in Proof of Lemma \ref{lemma:removal} will ensure these causal relations remain accurate into the future.

Meanwhile, given that $T_{<t}$ is defined as all time steps less than $t$, and now $\theta^\alpha_T = t$, we can conclude that for all previous time steps captured by $T_{<t}$, the new SCM extension will only contribute $\emptyset$ to the input set. This emulates the input set source not being present in the past, accurately reflecting the causal links present during those time steps. Thus, we can conclude that $M_t$ also has RCS after the new input set source introduction.
\end{proof}

\begin{lemma}[Status Quo / Removal Induction Step] \label{lemma:removal}
If $M_{t-\delta}$ is an SCM with RCS and zero or more input set sources are removed at the next time step $t$ then one can construct a new SCM $M_t$ that accurately captures causal relationships at time step $t$ and has RCS.
\end{lemma}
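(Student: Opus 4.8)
The plan is to mirror the structure of the proof of Lemma~\ref{lemma:introduction}, but instead of adding new input set sources I would adjust the terminal time conditional parameters $\theta^\omega_T$ of existing sources. First I would construct $M_t$ with exactly the same graphical structure as $M_{t-\delta}$ --- no new union variables or time conditional pairs are required, since this lemma concerns only the persistence or removal of sources already present. I would then partition the existing sources into those that \emph{continue} (present at both $t-\delta$ and $t$) and those that are \emph{removed} (present at $t-\delta$ but absent at $t$); the status quo case of zero removals is simply the subcase in which the removed partition is empty, so treating both together costs nothing.

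Next I would specify the parameter updates. For each continuing source I would set $\theta^\omega_T = t$, extending its activation window to include the current time step, exactly as the status quo demands. For each removed source I would leave $\theta^\omega_T$ unchanged at its previous value $t-\delta$ --- the last time step at which the source was present. The starting parameters $\theta^\alpha_T$ are left untouched throughout, as they were fixed at each source's introduction.

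I would then verify accuracy at time $t$. For a continuing source we have $\theta^\alpha_T \leq t \leq \theta^\omega_T = t$, so its time conditional pair $V^\alpha_{T?}, V^\omega_{T?}$ forwards the source value into the union variable $V_\cup$, correctly including it at $t$. For a removed source we have $t > \theta^\omega_T = t-\delta$, so the pair returns the degenerate exogenous variable $U_\emptyset$, whose empty set drops out of the union (as $\emptyset$ is the identity for $\cup$), correctly reflecting the source's absence at $t$.

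Finally, and this is where the main obstacle lies, I would establish RCS by showing the parameter updates do not retroactively alter behaviour at any past time step $t' \in T_{<t}$. For a removed source the window $[\theta^\alpha_T, t-\delta]$ is untouched, so its past contributions are unchanged. For a continuing source the sole change is extending $\theta^\omega_T$ from $t-\delta$ to $t$; since every $t' \in T_{<t}$ satisfies $t' \leq t-\delta$, the condition $t' \leq \theta^\omega_T$ held before the update and still holds after, so the source's past contribution is identical. Hence $M_t$ agrees with $M_{t-\delta}$ on all of $T_{<t}$, and since $M_{t-\delta}$ has RCS by assumption, $M_t$ accurately models every time step in $T_{<t}$ together with $t$ itself. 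The delicate step throughout is confirming that these empty set contributions interleave cleanly within the linked-list union structure, so that extending or freezing one source's window never perturbs another's --- which follows from $\emptyset$ being the union identity and from the sources being combined independently via successive union variables.
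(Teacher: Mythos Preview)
Your proposal is correct and follows essentially the same approach as the paper: update $\theta^\omega_T$ to $t$ for every source still present, leave it frozen for removed sources, and then argue accuracy at $t$ together with preservation of past behaviour. If anything, you are slightly more explicit than the paper in verifying RCS for the \emph{continuing} sources (arguing that extending $\theta^\omega_T$ from $t-\delta$ to $t$ cannot alter the truth of $t' \leq \theta^\omega_T$ for $t' \in T_{<t}$) and in noting that $\emptyset$ being the union identity is what makes the linked-list structure indifferent to which sources are active; the paper leaves both points implicit.
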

\begin{proof}[Proof of Lemma \ref{lemma:removal}]
At each time step $t$, for each input set source that is present in the modelled system, assign $\theta^\omega_T = t$ for the configurable parameter of the associated $V^\omega_{T?}$ in $M_t$ respectively. SCM input set sources for which $\theta^\alpha_T \leq t \leq \theta^\omega_T$ have their contributions included in the input set. Thus, this update combined with Lemma \ref{lemma:introduction} ensures that all input set sources that are present in the modelled system will have their contributions captured within the input set, as is accurate for time step $t$.

All the while, $\theta^\omega_T$ is always first initialised to the time step in which the associated input set source was introduced --- as per Proof of Lemma \ref{lemma:introduction} --- and is only updated as described above. As such, if an update is missed for a given input set source --- i.e. due to it no longer being present in the modelled system --- then it follows that $t > \theta^\omega_T$, and thus the SCM input set source will return $\emptyset$, no longer contributing to the input set. In doing so the SCM remains accurate for time step $t$, and additionally retains the same behaviour it possessed for previous time steps as $\theta^\alpha_T$ and $\theta^\omega_T$ remain the same as in $M_{t - \delta}$. Thus, we can conclude that $M_t$ also has RCS for instances where the status quo is maintained, and for instances where one or more input set sources are removed.
\end{proof}

%But in order to guarantee RCS it must be shown that neither introduction nor removal of an input source voids RCS:
%\begin{itemize}
%    \item \textbf{Removal:} The biggest issue is that if an input source that made a contributions in the past time steps $T_{<T}$ is removed at time step $T$ then in applying $M_t$ to a subset of $T_{<T}$ one will find these contributions lacking, and thus $M_t$ will not be accurate for these time steps, and thus lacks RCS.
%\end{itemize}

\begin{proof}[Proof of Theorem \ref{theorem:parent_set_variables}]
We have demonstrated through Lemma \ref{lemma:first_model} that the accurate SCM associated with the first time step $M_t$ automatically has retrospective temporal stationarity, effectively establishing a base case. Additionally we have shown that input set source introductions (Lemma \ref{lemma:introduction}), removals (Lemma \ref{lemma:removal}), or maintaining the status quo of input set sources between time steps (Lemma \ref{lemma:removal}) can be done while maintaining RCS for $M_t$ given that $M_{t - \delta}$ had it. Thus via induction we can infer that the sequence of SCMs $\{ M_0, M_{\delta}, ..., M_t \}$ accurately captures these dynamic mutations and all possess RCS.
%Thus it follows that one can capture dynamic introduction and removal of input set sources for a structural equation $f_{V_Y}(\cdot)$ via a series of SCMs $\{ M_0, M_{\delta}, ..., M_T \}$, and that RCS is maintained from $M_0$ onwards, despite these mutations.
\end{proof}

\section{SCM Architecture Details}
Here the details of the proposed SCM architecture are laid out in addition to detailing the links between the architecture and the formalisms introduced within this work. A legend describing the elements comprising the following SCM diagrams is presented in Fig. \ref{fig:legend}. For the sake of brevity in an already lengthy appendix, we refer the reader to the Git repository mentioned in Sec. \ref{sec:discussion} for details of the exact structural equations utilised by endogenous variables.

For the AV domain considered by this work goals have been formulated as a combination of a target lane and a target speed with corresponding times by which to achieve these goals. These goals are converted into motor torque and steering actuation values via a proportional feedback controller. The motor torque and steering values are then fed into a dynamic bicycle model \citep{guiggiani2018science}. Meanwhile the entity objects and their associated link objects are used to provide agent collision and drag forces / torques, treating vehicles as rectangular rigid bodies. The mechanics of the vehicles also assume they use front-wheel drive.

    \begin{figure}[t]
        \centering
        \includegraphics[width=0.3\linewidth]{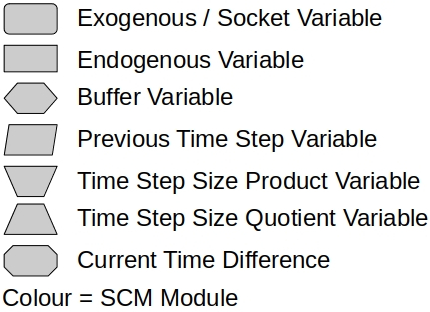}
        \caption{SCM Legend}
        \label{fig:legend}
    \end{figure}

    \paragraph{Point Mass}
    \begin{figure}[t]
        \centering
        \includegraphics[width=0.6\linewidth]{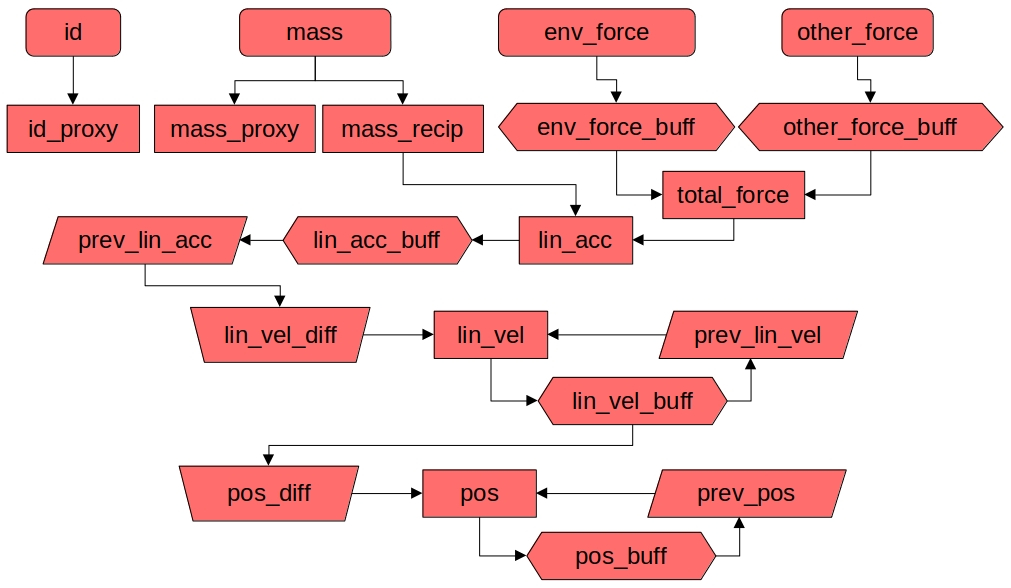}
        \caption{\emph{Point Mass} SCM}
        \label{fig:point_mass_scm}
    \end{figure}

    The SCM depicted in Fig. \ref{fig:point_mass_scm} represents a 2D dynamic point mass \citep{blum2006mathematics} that does not explicitly occupy any portion of space yet provides the represented object with a mass, position, linear velocity, linear acceleration, and input forces which influence these. %This SCM effectively provides the base physical representation of an agent within both architectures.

    Within the context of the example scenario, this is utilised to model the basic dynamics of the vehicles. While this is certainly not something that is only possible via the extensions provided here, the extensions do provide the following benefits:
    \begin{itemize}
        \item Succinct temporal representation through the \emph{prev\_pos}, \emph{prev\_lin\_vel}, and \emph{prev\_lin\_acc} PTS variables, preventing the need to duplicate variables across time steps.
        \item Encapsulation of SCM data within \emph{pos\_buff}, \emph{lin\_vel\_buff}, \emph{lin\_acc\_buff}, \emph{env\_force\_buff}, and \emph{other\_force\_buff} buffer variables.
        \item Utilisation of \emph{env\_force} and \emph{other\_force} socket variables, allowing the dynamic reconfiguration of force inputs from the environment, and other forces (e.g. self propelled motion).
    \end{itemize}

    \paragraph{Rectangular Rigid Body}

    \begin{figure}[t]
    \centering
    \includegraphics[width=0.8\linewidth]{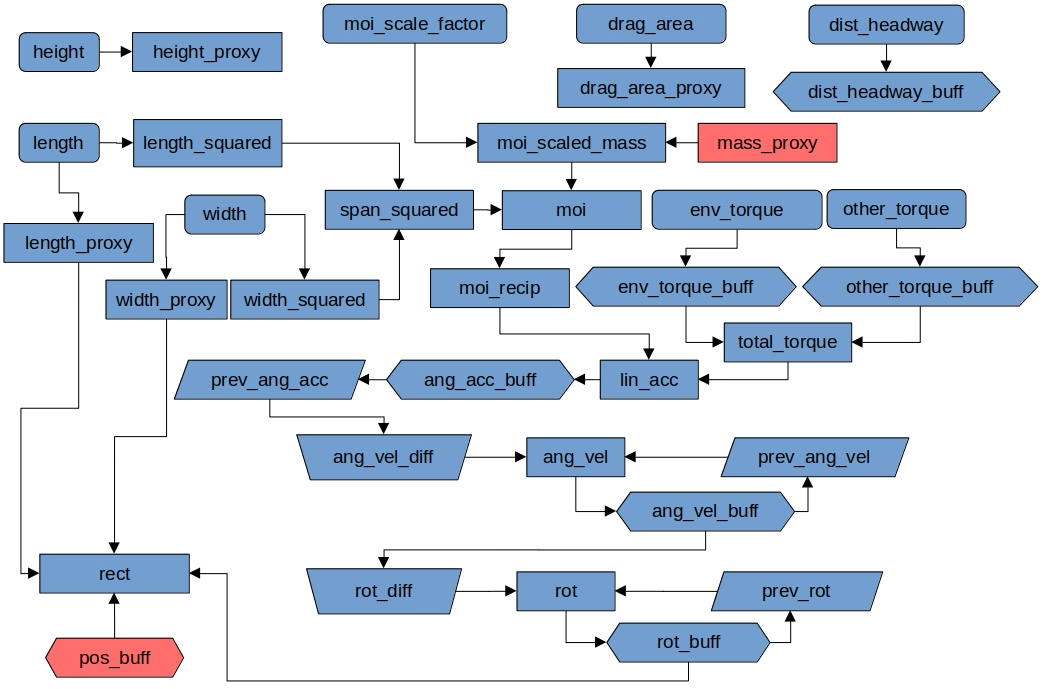}
    \caption{\emph{Rectangular Rigid Body} SCM}
    \label{fig:rectangular_rigid_body_scm}
    \end{figure}

    The SCM depicted in Fig. \ref{fig:rectangular_rigid_body_scm} inherits from the \emph{Point Mass} SCM, extending it to represent a 2D rectangular rigid body \citep{blum2006mathematics}. In doing so it allows the object to occupy a rectangular portion of space and provides it with a moment of inertia, rotation, angular velocity, angular acceleration, and input torques which influence these. 
    %This SCM was primarily created with use in the AV architecture as collisions and their consequences are of far greater importance to capture in safety critical driving scenarios, not to mention that a vehicle's orientation has far greater sway over its motion than with a pedestrian. 
    This SCM additionally tracks the open space in-front of the rectangular rigid body, as this information is calculated via the \emph{Rectangular Rigid Body Entity} SCM via which a rectangular rigid body is represented in a shared environment.

    Similar to the \emph{Point Mass} SCM, this is just used to provide further dynamics to the vehicles within the example scenario. Again, utilisation of the extensions is not strictly necessary here, but provides the following benefits:
    \begin{itemize}
        \item Succinct temporal representation through the \emph{prev\_rot}, \emph{prev\_ang\_vel}, and \emph{prev\_ang\_acc} PTS variables, preventing the need to duplicate variables across time steps.
        \item Encapsulation of SCM data within \emph{rot\_buff}, \emph{ang\_vel\_buff}, \emph{ang\_acc\_buff}, \emph{dist\_headway\_buff}, \emph{env\_torque\_buff}, and \emph{other\_torque\_buff} buffer variables.
        \item Utilisation of \emph{dist\_headway}, \emph{env\_force} and \emph{other\_force} socket variables, allowing the dynamic reconfiguration of force / distance headway inputs from the environment, and other forces.
    \end{itemize}

    \paragraph{Rectangular Rigid Body Entity}
    
    \begin{figure}[t]
        \centering
        \includegraphics[width=0.6\linewidth]{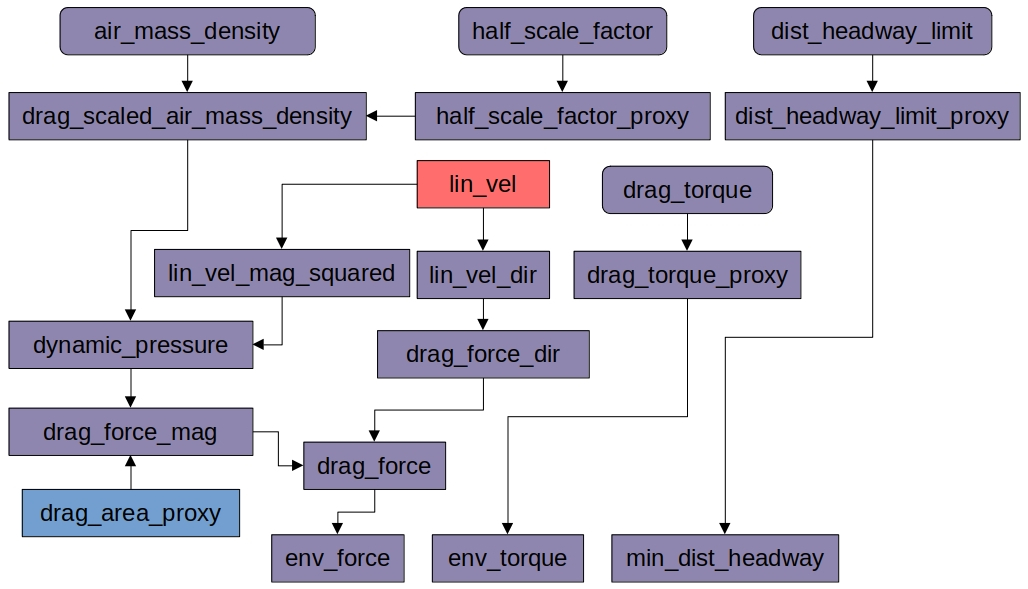}
        \caption{\emph{Rectangular Rigid Body Entity} SCM}
        \label{fig:rectangular_rigid_body_entity_scm}
    \end{figure}
    
    The SCM depicted in Fig. \ref{fig:rectangular_rigid_body_entity_scm} represents rectangular rigid body objects within a shared environment. Performs two functions. The first is to calculate environmental forces, combining drag force and the collision forces calculated by the \emph{Rectangular Rigid Body Link} SCMs associated with a given \emph{Rectangular Rigid Body Entity} SCM. The second is to calculate the minimum distance headway given by the aforementioned \emph{Rectangular Rigid Body Link} SCMs in order to determine the overall distance headway.

    This SCM facilitates all interactions between a given vehicle and the shared environment within the example scenario. Unlike the previous two SCMs, this SCM inherently relies upon the formalisms introduced here:
    \begin{itemize}
        \item The structure used to demonstrate that mutable input sets can be facilitated while maintaining RCS is used here in order to sum up the various environmental forces / torques that affect a given vehicle, as well as calculate the minimum distance headway. In particular this allows vehicles to come and go throughout the lifetime of a scenario such as the one described above. The implementation of this structure in turn requires the use of time conditional and socket variables.
        \item Due to the SCM not utilising any buffer variables, this SCM does not store any data associated with its variables, helping to optimise memory usage.
    \end{itemize}
    
    \paragraph{Rectangular Rigid Body Link}

    \begin{figure}[t]
        \centering
        \includegraphics[width=0.7\linewidth]{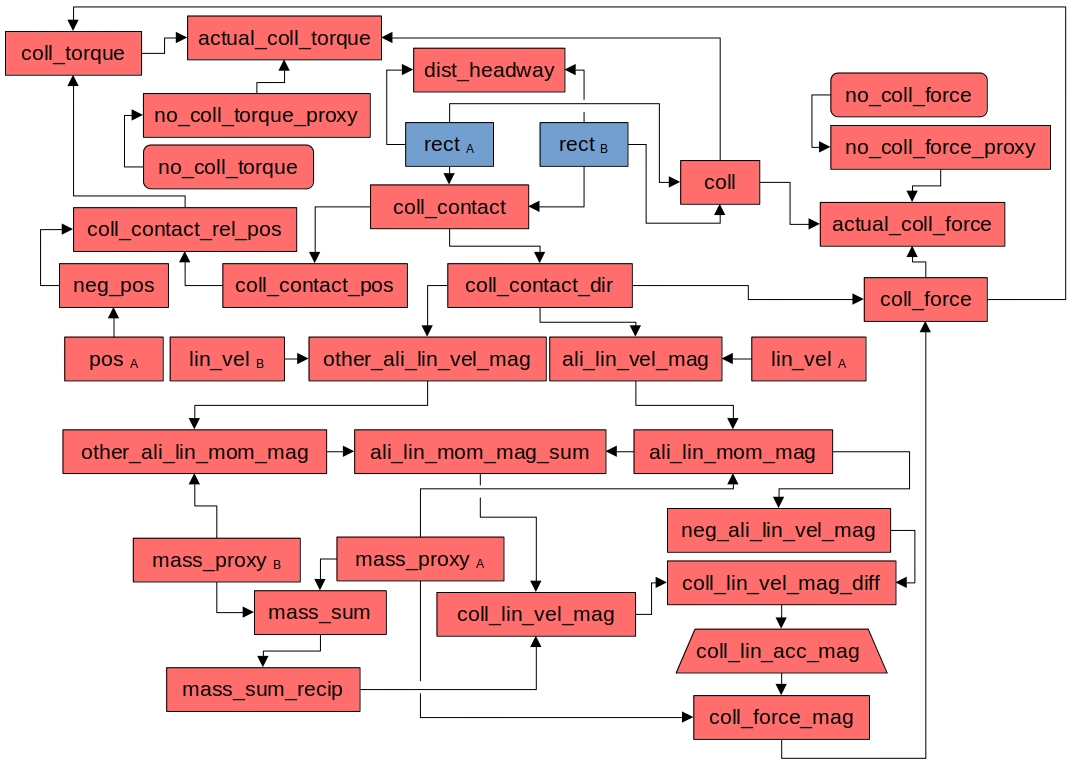}
        \caption{\emph{Rectangular Rigid Body Link} SCM}
        \label{fig:rectangular_rigid_body_link_scm}
    \end{figure}

    The SCM depicted in Fig. \ref{fig:rectangular_rigid_body_link_scm} captures interactions between two rectangular rigid bodies in a shared environment. Most critically performs the collision computations for the two rigid bodies in question. Additionally computes distance headway between the two rigid bodies as this is frequently used within the driving domain as a safety metric \citep{nahata2021assessing}. Variables belonging to the primary and secondary \emph{Rectangular Rigid Body} SCMs associated with the link will be indicated via $A$ and $B$ subscripts respectively.

    While the \emph{Rectangular Rigid Body Entity} SCM represented the sum total of factors affecting a given vehicle, this SCM captures the interactions between two vehicles in particular, such that SCMs of this type can then have their values collated by a \emph{Rectangular Rigid Body Entity} SCM. This SCM makes a small amount of usage of the extensions:
    \begin{itemize}
        \item The \emph{coll\_lin\_acc\_mag} TSSQ variable allows the calculation of the required acceleration resulting from a collision given a required velocity change and the size of a time step.
        \item Due to the SCM not utilising any buffer variables, this SCM does not store any data associated with its variables, helping to optimise memory usage.
    \end{itemize}
    
    \paragraph{FWD Car}

    \begin{figure}[t]
        \centering
        \includegraphics[width=\linewidth]{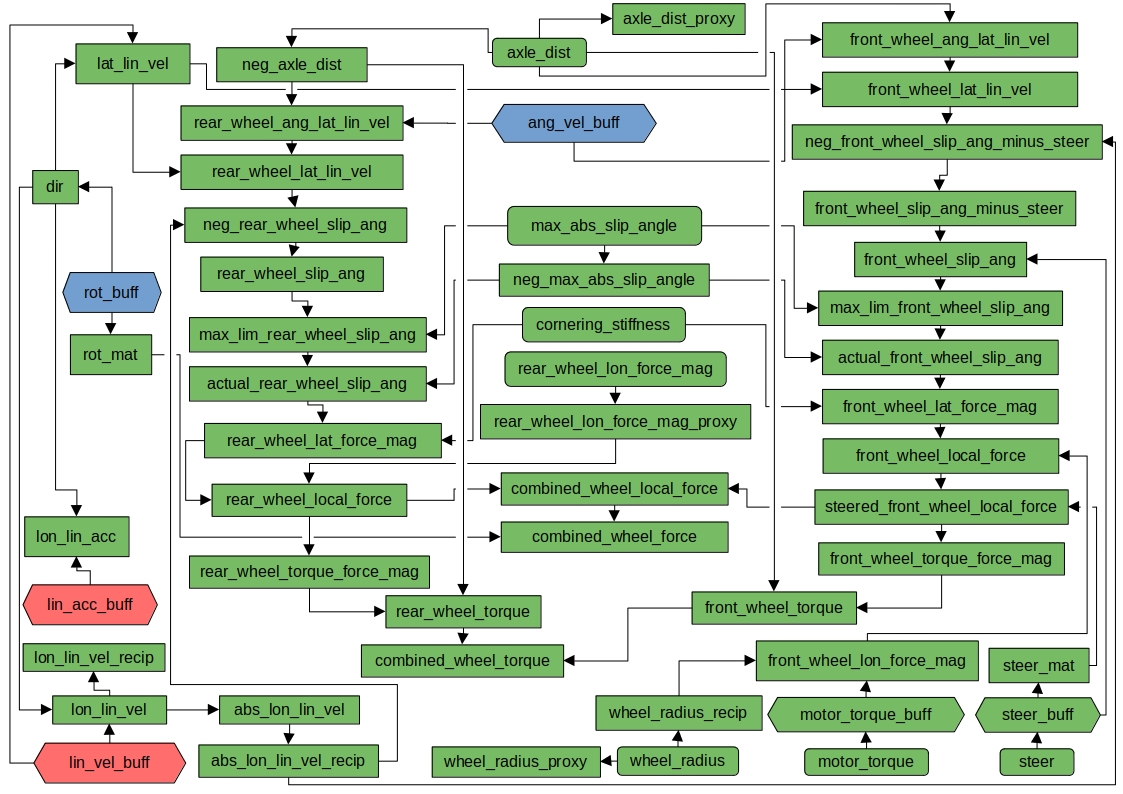}
        \caption{\emph{FWD Car} SCM}
        \label{fig:fwd_car_scm}
    \end{figure}
    
    The SCM depicted in Fig. \ref{fig:fwd_car_scm} inherits from the \emph{Rectantular Rigid Body} SCM to represent a road vehicle, in particular one that has FWD. This SCM provides forces and torques as output to the underlying \emph{Rectangular Rigid Body} SCM, instead providing variables for motor torque and steering as input. In order to calculate the forces and torques required, the SCM utilises these two inputs and treats the represented object as a dynamic bicycle \citep{guiggiani2018science}, a simple means by which one can model a variety of vehicles.
    
    In contrast to the \emph{Point Mass} and \emph{Rectangular Rigid Body} SCMs, this SCM provides the highest level of the physical representation of vehicles within the AV domain and is comprised of vehicle-specific variables. Once again, utilisation of the extensions is not completely necessary here, but is desirable:
    \begin{itemize}
        \item Encapsulation of SCM data within \emph{motor\_torque\_buff} and \emph{steer\_buff} buffer variables. This also saves on the memory consumption of the SCM given the large number of variables.
        \item Utilisation of \emph{motor\_torque} and \emph{steer} socket variables, allowing the dynamic reconfiguration of force / distance headway inputs from the environment, and other forces. This is arguably of greater use here than in the \emph{Point Mass} and \emph{Rectangular Rigid Body} SCMs as this SCM interfaces directly with controller components within the architecture.
    \end{itemize}

    \paragraph{Motor Torque Control FWD Car}

    \begin{figure}[t]
        \centering
        \includegraphics[width=0.65\linewidth]{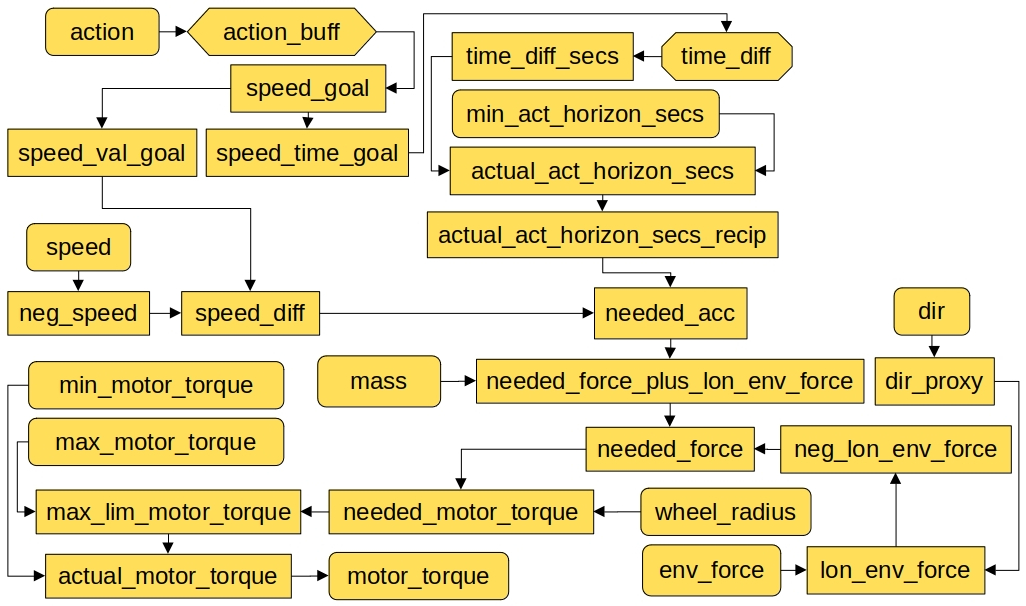}
        \caption{\emph{Motor Torque Control FWD Car} SCM}
        \label{fig:motor_torque_control_fwd_car_scm}
    \end{figure}
    
    The SCM depicted in Fig. \ref{fig:motor_torque_control_fwd_car_scm} provides a control mechanism which outputs motor torque in return for giving a goal speed and a goal time as input. From the difference between the longitudinal linear velocity given by the \emph{FWD Car} SCM and the goal velocity, and the difference between the current time and the goal time it is possible to calculate the necessary acceleration. Using the mass of the \emph{FWD Car} SCM one can calculate the required force, before making adjustments to the force to account for environment forces also acting upon the \emph{FWD Car} SCM. From the calculated force and wheel radius of the vehicle one can finally approximate the motor torque that can then be fed into the \emph{FWD Car} SCM.

    Within the context of the example scenario this SCM, along with the \emph{Steer Control FWD Car} SCM, is responsible for bridging the gap between physical representation and action planner for the vehicles in question. Here, there is utilisation of several SCM formalism extensions:
    \begin{itemize}
        \item The \emph{time\_diff} TCTD variable allows the calculation of the time difference between the action goal time and the current time.
        \item Encapsulation of SCM data within \emph{action\_buff} buffer variable.
        \item Utilisation of \emph{action} socket variable, allowing the dynamic reconfiguration of which planner ought to be used.
        \item Utilisation of \emph{speed}, \emph{dir}, \emph{mass}, \emph{wheel\_radius}, and \emph{env\_force} socket variables. Unlike the \emph{action} socket variable, these are used as inputs from the physical representation, which are utilised to determine how the driving actuation variables of the physical representation (i.e. the \emph{motor\_torque} variable) ought to be modulated.
    \end{itemize}
    
    \paragraph{Steer Control FWD Car}

    \begin{figure}[t]
        \centering
        \includegraphics[width=0.65\linewidth]{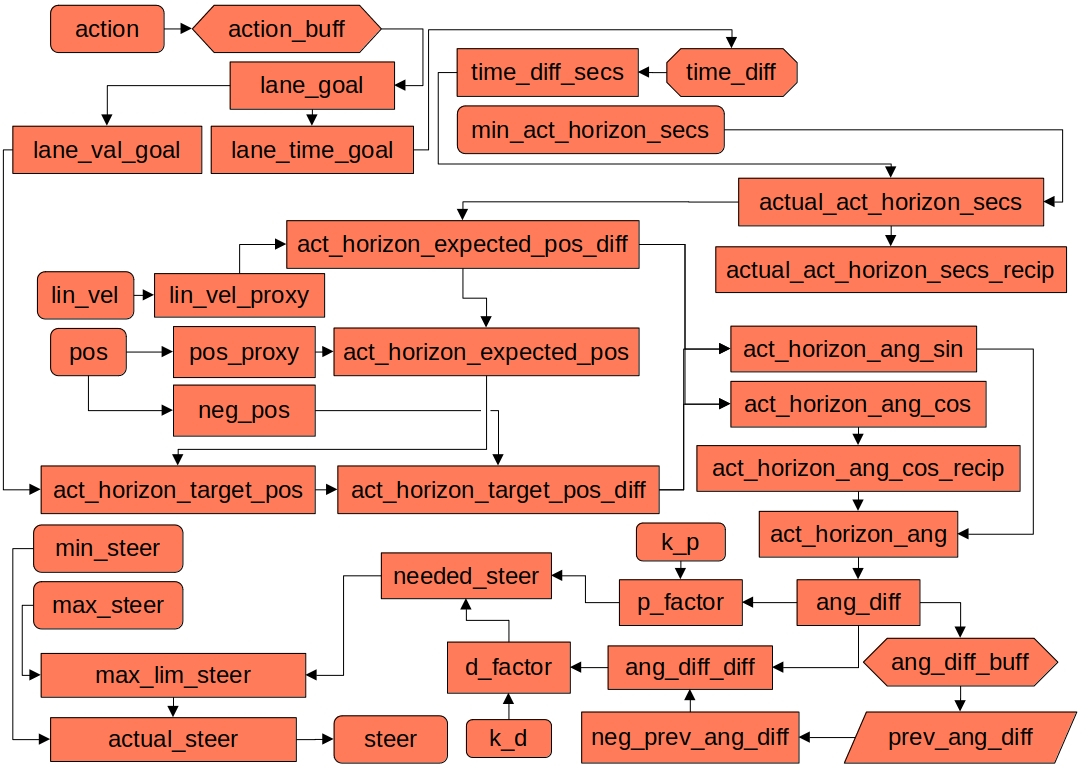}
        \caption{\emph{Steer Control FWD Car} SCM}
        \label{fig:steer_control_fwd_car_scm}
    \end{figure}

    The SCM depicted in Fig. \ref{fig:steer_control_fwd_car_scm} provides a control mechanism which outputs steer in return for giving a goal lane id and a goal time as input. In order to calculate the steering first the expected position of the rectangular rigid body is calculated for a predefined future horizon. This is then projected onto the closest point of the central path of the lane associated with the goal lane id. The expected position combined with the projected position and current position given by the \emph{FWD Car} SCM provide an angle error. This angle error, along with its derivative are then used as inputs to a proportional-derivative controller \citep{liptak2013process}, with the steering as the output which is then fed into the \emph{FWD Car} SCM.
    
    Within the context of the example scenario this SCM, along with the \emph{Motor Torque Control FWD Car} SCM, is responsible for bridging the gap between physical representation and action planner for the vehicles in question. Here, there is utilisation of several SCM formalism extensions:
    \begin{itemize}
        \item The \emph{time\_diff} TCTD variable allows the calculation of the time difference between the action goal time and the current time.
        \item Encapsulation of SCM data within \emph{action\_buff} and \emph{ang\_diff\_buff} buffer variables.
        \item Utilisation of \emph{action} socket variable, allowing the dynamic reconfiguration of which planner ought to be used.
        \item Utilisation of \emph{pos} and \emph{lin\_vel} socket variables. Unlike the \emph{action} socket variable, these are used as inputs from the physical representation, which are utilised to determine how the driving actuation variables of the physical representation (i.e. the \emph{steer} variable) ought to be modulated.
    \end{itemize}

    \paragraph{Full Control FWD Car}
    Simply a wrapper SCM that inherits from both \emph{Motor Torque Control FWD Car} and \emph{Steer Control FWD Car} SCMs. These SCMs combined are able to take a goal speed and goal lane id, along with target times by which to accomplish these goals, and in turn output motor torque and steer values for the associated \emph{FWD Car} SCM.

    \paragraph{Greedy Plan FWD Car}

    \begin{figure}[t]
        \centering
        \includegraphics[width=0.65\linewidth]{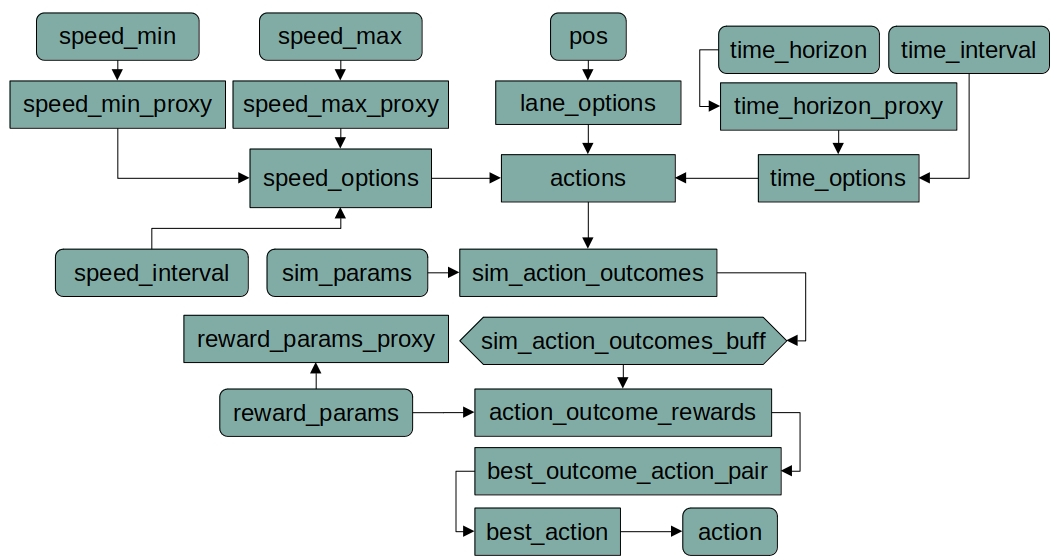}
        \caption{\emph{Greedy Plan FWD Car} SCM}
        \label{fig:greedy_plan_fwd_car_scm}
    \end{figure}
    
    The SCM depicted in Fig. \ref{fig:greedy_plan_fwd_car_scm} plans the next action for a \emph{Full Control FWD Car} SCM specified as goals. In order to do so, a range of candidate speeds, lanes, and target times are calculated and combined to form a variety of possible actions. The outcomes associated with taking these possible actions are then simulated, and the rewards associated with the action-outcome pairs are calculated. This does rely upon predefined a predefined simulator and predefined reward calculator being provided to the \emph{Greedy Plan FWD Car} SCM. In any case, once the rewards have been calculated the action associated with the maximum reward is selected and fed into the corresponding \emph{Full Control FWD Car} SCM.

    Arguably the most important SCM besides the \emph{FWD Car} SCM in the context of the example scenario, this SCM captures the decision making capabilities of vehicular agents within the scene. In other words the decisions that are being identified as part of behavioural interactions are assumed to be made within a roughly comparable form of planner. In order to achieve this the following extensions are utilised:
    \begin{itemize}
        \item Encapsulation of SCM data within \emph{sim\_action\_outcomes\_buff} buffer variable.
        \item Utilisation of \emph{pos} socket variable. This is used as an input from the physical representation, which is utilised to determine how an action ought to be selected. In particular the \emph{pos} variable is used to identify candidate lanes for the vehicular agent.
    \end{itemize}

\end{document}